\documentclass[10pt]{article} 
\usepackage[accepted]{tmlr}



\usepackage{microtype}
\usepackage{graphicx}
\usepackage{subfigure}
\usepackage{booktabs} 

\usepackage{hyperref}


\usepackage{amsmath}
\usepackage{amssymb}
\usepackage{mathtools}
\usepackage{amsthm}

\usepackage[capitalize,noabbrev]{cleveref}

\newcommand{\LL}{\mathcal{L}}

\newcommand{\EE}{\mathbb{E}\,}
\newcommand{\DD}{\mathcal{D}}
\newcommand{\RR}{\mathbb{R}}

\newcommand{\FF}{\mathcal{F}}

\newcommand{\trsp}{\top}

\DeclareMathOperator*{\tr}{tr}
\DeclareMathOperator*{\rk}{rk}
\DeclareMathOperator*{\supp}{supp}

\DeclareMathOperator*{\Ei}{Ei}

\theoremstyle{plain}
\newtheorem{theorem}{Theorem}[section]

\newtheorem{lemma}[theorem]{Lemma}

\newtheorem{conjecture}[theorem]{Conjecture}
\theoremstyle{definition}

\newtheorem{assumption}[theorem]{Assumption}
\theoremstyle{remark}

\title{A Generalization Bound for Nearly-Linear Networks}


\author{
    \name Eugene Golikov \email evgenii.golikov@epfl.ch \\
    \addr Chair of Statistical Field Theory\\
    École Polytechnique Fédérale de Lausanne (EPFL)
}



\begin{document}

\maketitle

\begin{abstract}
  We consider nonlinear networks as perturbations of linear ones.
  Based on this approach, we present a novel generalization bound that become non-vacuous for networks that are close to being linear.
  The main advantage over the previous works which propose non-vacuous generalization bounds is that our bound is \emph{a priori}: performing the actual training is not required for evaluating the bound.
  To the best of our knowledge, it is the first non-vacuous generalization bound for neural nets possessing this property.
\end{abstract}

\section{Introduction}
\label{sec:intro}

Despite huge practical advancements of deep learning, the main object of this field, a neural network, is not yet fully understood.
As we do not have a complete understanding of how neural networks learn, we are not able to answer the main question of deep learning theory: \emph{why do neural networks generalize on unseen data?}

While the above question is valid for any supervised learning model, it is notoriously difficult to answer for neural nets.
The reason is that modern neural nets have billions of parameters and as a result, huge capacity.
Therefore among all parameter configurations that fit the training data, there provably exist such configurations that do not fit the held-out data well \citep{zhang2016understanding}.
This is the reason why classical approaches for bounding a generalization gap, i.e. the difference between distribution and train errors, fall short on neural networks: such approaches bound the gap uniformly over a model class.
That is, if weights for which the network performs poorly exist, we bound our trained network's performance by performance of that poor one.

As we observe empirically, networks commonly used in practice do generalize, which means that training algorithms we use (i.e. gradient descent or its variants) choose "good" parameter configurations despite the existence of poor ones.
In other words, these algorithms are \emph{implicitly biased} towards good solutions.

Unfortunately, implicit bias of gradient descent is not fully understood yet.
This is because the training dynamics is very hard to integrate, or even characterize, analytically.
There are two main obstacles we could identify.
First, modern neural networks have many layers, resulting in a high-order weight evolution equation.
Second, activation functions we use are applied to hidden representations elementwise, destroying a nice algebraic structure of stacked linear transformations.

If we remove all activation functions, the training dynamics of gradient descent can be integrated analytically under certain assumptions \citep{saxe2013exact}.
However, the resulting model, a linear network, is as expressive as a linear model, thus loosing one of the crucial advantages of neural nets.

\paragraph{Idea.}
The idea we explore in the present paper is to consider nonlinear nets as perturbations of linear ones.
We show that the original network can be approximated with a proxy-model whose parameters can be computed using parameters of a linear network trained the same way as the original one.
Since the proxy-model uses the corresponding linear net's parameters, its generalization gap can be meaningfully bounded with classical approaches.
Indeed, if the initial weights are fixed, the result of learning a linear network to minimize square loss on a dataset $(X \in \RR^{d \times m},Y \in \RR^{d_{out} \times m})$ is determined uniquely by $Y X^\trsp \in \RR^{d_{out} \times d}$ and $X X^\trsp \in \RR^{d \times d}$, where $d, d_{out}$ are the input and output dimensions, and $m$ is the dataset size.
The number of parameters in these two matrices is much less than the total number of parameters in the network, making classical counting-based approaches meaningful.

\paragraph{Contributions.}
Our main contribution is a generalization bound given by \cref{thm:general_bound_binary}, which is ready to apply in the following setting:
(1) fully-connected networks, (2) gradient descent with vanishing learning rate (gradient flow), (3) binary classification with MSE loss.
The main disadvantage of our bound is that it diverges as training time $t$ goes to infinity.
We discuss how to choose the training time in such a way that the bound stays not too large while the training risk diminishes significantly, in \cref{sec:choosing_training_time}.
We validate our bound on a simple fully-connected network trained on a downsampled MNIST dataset, and demonstrate that it becomes non-vacuous in this scenario (\cref{sec:experiments}).
We list advantages and disadvantages of our bound over that of existing generalization bounds in \cref{sec:comparison}.
We discuss the assumptions we use, as well as possible improvements of our bound, in \cref{sec:discussion}.
Finally, we discuss how far the approach we choose in this work, i.e. generalization bounds based on deviation from specific proxy models, could lead us in the best case scenario (\cref{sec:optimistic_scenario}).

\section{Comparison to previous work}
\label{sec:comparison}

\paragraph{Advantages.}
The bound of our \cref{thm:general_bound_binary} has the following advantages over some other non-vacuous bounds available in the literature (e.g. \citet{biggs2022non,galanti2023comparative,dziugaite2017computing,zhou2018nonvacuous}):
\begin{enumerate}
  \item It is an \emph{a priori} bound, i.e. getting the actual trained the model is not required for evaluating it.
  To the best of our knowledge, it is the first non-vacuous \emph{a priori} bound available for neural nets.
  All works mentioned above, despite providing non-vacuous bounds, could be evaluated only on a trained network thus relying on the implicit bias phenomenon which is not well understood yet.
  \item Related to the previous point, to the best of our knowledge, our bound is the first to incorporate the implicit bias of gradient flow explicitly.
  This is done by demonstrating that the model does not deviate much from a specific proxy model.
  The class of realizable proxy models has a small number of effective parameters.
  Informally, this indicates that this class is "simple"; hence the gradient flow is biased towards this simple class, and our bound exploits this property.
  \item It does not require a held-out dataset to evaluate it, compared to \citet{galanti2023comparative} and coupled bounds of \citet{biggs2022non}.
  Indeed, if one had a held-out dataset, they could just use it directly to evaluate the trained model, thus questioning the practical utility of such bounds.
  \item It does not grow with network width.
  In contrast, PAC-Bayesian bounds, \citet{biggs2022non,dziugaite2017computing,zhou2018nonvacuous}, might grow with width.
  \item Similarly to \citet{biggs2022non,galanti2023comparative}, we bound the generalization gap of the original trained model, not its proxy.
  In contrast, \citet{dziugaite2017computing} introduces a Gaussian noise to the learned weights, while \citet{zhou2018nonvacuous} crucially relies on quantization and compression after training.
\end{enumerate}
Related to the second point, we bound the generalization gap for the proxy model with a simple parameter-counting bound (similar to that of \citet{vapnik1971}), thus demonstrating that, contrary to a common judgement, such bounds could be useful in the context of models with many more parameters than data points.

\paragraph{Disadvantages.}
For a fair comparison, we also list the disadvantages our present bound has:
\begin{enumerate}
  \item The bound of our \cref{thm:general_bound_binary} becomes non-vacuous only when the following two conditions hold.
  First, a simple counting-based generalization bound for a linear model evaluated in the same setting should be non-vacuous.
  Such a bound is vacuous even for binary classification on the standard MNIST dataset, but becomes non-vacuous if we downsample the images. 
  \item Second, 
  the activation function has to be sufficiently close to being linear.
  To be specific, for a two-layered leaky ReLU neural net trained on MNIST downsampled to 7x7, one needs the negative slope to be not less than $0.99$ ($1$ corresponds to a linear net, while ReLU corresponds to $0$).
  \item One may hope for the bound to be non-vacuous only for a partially-trained network, while for a fully-trained network the bound diverges.
  \item Even in the most optimistic scenario, when we manage to tighten the terms of our bound as much as possible, our bound stays non-vacuous only during the early stage of training when the network has not started "exploiting" its nonlinearity yet (\cref{sec:optimistic_scenario}).
  However, the minimal negative ReLU slope for which the bound stays non-vacuous is much smaller, $0.6$.
\end{enumerate}

\section{Related work}
\label{sec:related_work}

\paragraph{Non-vacuous generalization bounds.}

While first generalization bounds date back to \citet{vapnik1971}, the bounds which are non-vacuous for realistically large neural nets appeared quite recently.
Specifically, \citet{dziugaite2017computing} constructed a PAC-Bayesian bound which was non-vacuous for small fully-connected networks trained on MNIST and FashionMNIST.
The bound of \citet{zhou2018nonvacuous}, also of PAC-Bayesian nature, relies on quantization and compression after training.
It ends up being non-vacuous for VGG-like nets trained on large datasets of ImageNet scale.
The bound of \citet{biggs2022non} is the first non-vacuous bound that applies directly to the learned model and not to its proxy.
It is not obvious whether their construction could be generalized to neural nets with more than two layers.
The bound of \citet{galanti2023comparative} is not PAC-Bayesian in contrast to the previous three.
It is based on the notion of effective depth: it assumes that a properly trained network has small effective depth, even if it is deep.
Therefore its effective capacity is smaller than its total capacity, which allows for a non-vacuous bound.
See the previous section for discussion of some of the features of these bounds.

\paragraph{A priori and a posteriori generalization bounds.}

Most of the generalization bounds available in the literature are \emph{a posteriori}.
That is, some PAC-Bayesian bounds \citep{dziugaite2017computing,neyshabur2018a,biggs2022non}, as well as Rademacher complexity-based bounds \citep{bartlett2017spectrally}, depend on norms of learned weights, or distances between the learned weights and their initializations, so they cannot be computed before these norms or distances are known.
The bound of \citet{zhou2018nonvacuous} depends on the trained model size after compression and quantization, which cannot be predicted before training and compression.
That is, this bound implicitly relies on the fact that the training procedure prefers well-compressible models; to the best of our knowledge, this fact does not have a solid explanation yet.
The bound of \cite{galanti2023comparative} depends on a so-called \emph{effective depth} of a learned model, which also cannot be predicted in advance.

The only \emph{a priori} bounds we are aware of are the classical uniform bounds based on VC-dimension and Rademacher complexity, see \cite{vapnik1971}.
These bounds grow with model expressivity, hence with both width and depth, which make them vacuous in most of the practical scenarios involving neural nets.
The reason they are vacuous is also related to the fact that among all models that interpolate the data, one can often find a model that does not generalize well, see empirical results of \cite{zhang2016understanding}.
Uniform bounds bound the generalization gap in the worst case, hence if a "bad" model exists in our model class, the whole generalization bound has no chance to be good.

Overall, generalization bounds that use some complexity of the whole function class fall short due to the trade-off they impose: low complexity $=$ provably good generalization, high complexity = maybe poor generalization.
Modern neural architectures are deep and wide, hence they have high complexity, while they still generalize well.
The reason is that the training procedure we often use is likely to be implicitly biased towards well-generalizing solutions.
If we bound the generalization gap uniformly over the whole function class, we ignore this effect.
The \emph{a posteriori} bounds we discuss above do take some form of implicit bias into account (i.e. low weight norm, compressibility etc.).
So does our bound: we exploit the fact that a nearly linear network is close to a proxy which uses only weights of a linear network.
In their turn, the trained weights of a linear network depend only on a few parameters as long as their initialization is given.
The difference is that our bound does not need any prior info about the learned weights (only on their initialization and the optimization procedure); hence it is \emph{a priori}.

\paragraph{Linear networks training dynamics.}

The pioneering work that integrates the training dynamics of a linear network under gradient flow to optimize square loss is \citet{saxe2013exact}.
This work crucially assumes that the initial weights are well aligned with the eigenvectors of the optimal linear regression weights $Y X^+$, where $(X, Y)$ is the train dataset.
As the initialization norm approaches zero, the learning process becomes more sequential: components of the data are learned one by one starting from the strongest.
\citet{li2020towards} conjecture that the same happens for any initialization approaching zero excluding some set of directions of measure zero.
They prove this result for the first, the strongest, component, but moving further seems more challenging.
See also \citet{yun2020unifying, jacot2021deep}.
We note a recent result of \cite{tu2024mixed} who propose a so-called \emph{mixed} dynamics for two-layered linear networks that does not suffer from this issue.

\paragraph{Nearly-linear neural nets.}

Our generalization gap bound decreases as activation functions get closer to linearity.
While nearly-linear activations do not conform with the usual practice, nearly-linear networks have been studied before.
That is, \citet{li2022neural} demonstrated that when width $n$ and depth $L$ go to infinity with constant ratio, the hidden layer covariances at initialization admit a meaningful limit as long as the ReLU slopes behave as $1 \pm \tfrac{c}{\sqrt{n}}$, i.e. become closer and closer to linearity.
\citet{noci2023shaped} explored a similar limit for Transformers \citep{vaswani2017attention}.
Another example is \citet{kumar2024grokking}, who used a toy small-$\epsilon$ model to explain the phenomenon of \emph{grokking} \citep{power2022grokking} as delayed feature learning.

\paragraph{Linearized training of neural networks and NTK.}

It is important to emphasize that linear networks we consider in the present work are fundamentally different from the NTK model introduced by \cite{jacot2018neural} resulted from linearized training of a neural network.
That is, by a linear network we mean a network $f_\theta(x)$ who is linear in its input, $x$, but not necessarily in its weights $\theta$.
Whereas if we linearize the training procedure, the trained model becomes linear in $\theta$, but not in $x$.
\citet{jacot2018neural} demonstrated that training a neural network becomes equivalent to training a kernel method under specific (non-standard) parameterization as width goes to infinity (NTK limit).
Since training a kernel method is equivalent to training a random feature model with sufficiently large number of features, the corresponding training procedure is indeed linear in weights.
See also \citet{yang2021tensor_iib} for proving NTK limit for general architectures, and \citet{chizat2018lazy} for demonstrating linearized training for large weight initializations.

As in the NTK limit training a neural net becomes equivalent to training a kernel method, features the model uses stay the same over the whole process of training.
In other words, the model does not enjoy \emph{feature learning} when training is linearized.
In contrast, even a linear network with two layers trained with gradient descent does demonstrate feature learning: see \cite{saxe2013exact,tu2024mixed}.
The feature learning it exhibits is kernel alignment and a so-called \emph{momentum effect}: the associated kernel (NTK), or equivalently, the features the model uses, aligns over strong components of the data and extends along them \cite{tu2024mixed}.
This way, these strong components get learned first, before weak components, often resulted from data noise.

We also emphasize that the proxy models we use to prove our results, while exploiting weights of a trained linear network, are not linear themselves; neither in $x$, nor in weights.
Therefore they do not suffer from expressivity issues as linear networks (which are all functionally equivalent to linear models $x \to W x$), while they do enjoy feature learning (since linear networks do).

Therefore our approach is quite orthogonal to NTK and generalizability of kernel methods.
In terms of high-level methodology, the closest paper we could mention is \citet{arora2019fine}, where they consider a two-layer NTK-parameterized MLP, and prove a generalization bound for it when width is sufficiently large.
For this, they combine a generalization bound for the infinite-width limit when the kernel is constant, and a term that takes into account the fact that the kernel for finite width deviates from the limit one.
Whereas what we do is we combine a generalization bound for a proxy model that exploits the weights of a linear network and a proxy deviation bound.
The bound of \citet{arora2019fine} becomes good when the width is large (hence the finite-width NTK is close to the limit NTK), while our bound becomes good when the activation functions are close to be linear (hence the proxy does not deviate from the original model much).

\section{Our approach}
\label{sec:idea}

Before formally stating our main bound, we present the high-level approach for constructing generalization bounds we take in our work.

\paragraph{Generalization bound based on proxy models.}
Let $f$ be a model trained on a binary classification task.
We will look for a proxy model $g$ that satisfies the following properties: (1) it is close enough to $f$, and (2) the generalization gap for $g$ could be bounded well enough.
Given such $g$, we bound the distribution risk of $f$ as follows:
\begin{equation}
    R[f] \leq \hat R_\gamma[f]
    + \left(R^C_\gamma[g] - \hat R^C_\gamma[g]\right)
    + \frac{1}{\gamma} \EE_{x \sim \DD} |f(x) - g(x)|
    + \frac{1}{\gamma} \frac{1}{m} \sum_{k=1}^m |f(x_k) - g(x_k)|,
    \label{eq:generalization_gap_bound_using_proxy}
\end{equation}
where $\gamma > 0$ and we used three different risk functions:
\begin{enumerate}
    \item Misclassification risk: $r(y,\hat y) = [y \hat y < 0]$,
    \item Margin risk: $r_\gamma(y,\hat y) = [y \hat y < \gamma]$,
    \item Continuous margin risk: $r^C_\gamma(y,\hat y) = [y \hat y < 0] + \left(1 - \frac{y \hat y}{\gamma}\right) [y \hat y \in [0,\gamma]]$,
\end{enumerate}
giving rise to the distribution risk $R[f] = \EE_{x,y \sim \DD} r(y,f(x))$, and similarly $R_\gamma[f]$ and $R^C_\gamma[f]$, and its empirical counterpart $\hat R[f] = \tfrac{1}{m} \sum_{k=1}^m r(y_k,f(x_k))$, and similarly $\hat R_\gamma[f]$ and $\hat R^C_\gamma[f]$.
Here $\DD$ is the data distribution, and $(x_k,y_k)_{k=1}^m$ is the training dataset samled from $\DD$ in an iid manner.

We derived \cref{eq:generalization_gap_bound_using_proxy} simply from the fact that $r^C_\gamma(\cdot,y)$ is $1/\gamma$-Lipschitz, and $r \leq r_\gamma^C \leq r_\gamma$, see \cref{eq:R_bound} below for a more elaborate derivation.
The first term on the right hand side of \cref{eq:generalization_gap_bound_using_proxy} is the empirical margin risk of the original model $f$.
The second term is the generalization gap of the proxy-model $g$ (with respect to continuous margin risk) which we assume to be easy to bound.
The two remaining terms are deviations of $g$ from $f$ averaged over the whole data distribution $\DD$ and the dataset, divided by $\gamma$.
The parameter $\gamma$ controls the trade-off between the first term and the other three: the first term increases with $\gamma$ (eventually reaching 1), while the other three decrease.

\paragraph{Proxy models for a nearly-linear network.}
The problem now boils down to finding a good proxy $g$ that approximates the trained model $f$ well, and whose generalization gap could be well-bounded.
Let $f^\epsilon_\theta(x)$ be a neural network with parameters $\theta$, input $x \in \RR^d$, scalar output, and activation functions $\phi^\epsilon(z) = z + \epsilon \psi(z)$, $\psi$ is a positively homogeneous map (e.g. ReLU).
That is, $f^0_\theta(x)$ is linear in $x$ (but not necessarily in $\theta$), and $\epsilon$ characterizes the deviation from linearity.


Let $f^\epsilon_{\theta^\epsilon}$ be the trained model whose generalization gap we would like to bound, and let $f^0_{\theta^0}$ be a linear model trained the same way.
We consider the following proxies: $g_1(x) = f^\epsilon_{\theta^0}(x)$ and $g_2(x) = g_1(x) + \left(f^\epsilon_{\theta^\epsilon}(X) - g_1(X)\right) X^+ x$, where $X \in \RR^{d \times m}$ is a matrix with rows $(x_k)_{k=1}^m$, and $X^+$ is its Moore-Penrose pseudo-inverse.

In words, the first proxy is a nonlinear network that uses weights of the trained linear one, while the second proxy results from linearly correcting the first one on the train dataset.
These proxies deviate from the original model as follows: $|g_\kappa(x) - f^\epsilon_{\theta^\epsilon}(x)| = O(\epsilon^\kappa)$ for $\kappa \in \{1,2\}$ and any given $x \in \RR^d$.

Under certain assumptions, $\theta^0$, the trained linear model weights, depend only on $Y X^+ \in \RR^d$, where $Y \in \RR^{1 \times d}$ is a row-vector of targets.
Since $g_1$ is fully described by $\theta^0$, the class of models realizable by it has only $d$ parameters.
Since $g_2$ is merely $g_1$ plus a linear correction, its respective class of models has only $d + d = 2 d$ parameters.
In both cases, the number of parameters the proxy has is much smaller than the total number of weights of the network, which is $(d+1) n + (L-2) n^2$.
Then classical uniform bound techniques \citep{vapnik1971} result in a generalization bound for proxy models that grows as $\sqrt{\tfrac{\kappa d}{m}}$.
This bound depends neither on width of the network, nor on its depth.

\section{Main result}
\label{sec:main}

\paragraph{Notations.}
For integer $L \geq 0$, $[L]$ denotes the set $\{1, \ldots, L\}$.
For integers $l \leq L$, $[l,L]$ denotes the set $\{l, \ldots, L\}$.
For a vector $x$, $\|x\|$ denotes its Euclidean norm, while $\|x\|_1$ denotes its $l_1$-norm.
For a matrix $X$, $\|X\|$ denotes its maximal singular value, while $\|X\|_F$ denotes its Frobenius norm.

\subsection{Setup}
\label{sec:setup}

\paragraph{Model.}
The model we study is a fully-connected LeakyReLU network with $L$ layers:
\begin{equation}
  f^\epsilon_\theta(x) = W_L x^{\epsilon}_{\theta,L-1}(x),
  \qquad
  x^{\epsilon}_{\theta,0}(x) = x,
  \qquad
  x^{\epsilon}_{\theta,l}(x) = \phi^\epsilon\left(W_l x^{\epsilon}_{\theta,l-1}(x)\right)
  \quad
  \forall l \in [L-1],
\end{equation}
where $\theta \in \RR^N$ denotes the vector of all weights, i.e. $\theta = \mathrm{cat}(\{\mathrm{vec}(W_l)\}_{l=1}^L)$, $W_l \in \RR^{n_l \times n_{l-1}}$ $\forall l \in [L]$, and $\phi^\epsilon$ is a Leaky ReLU with a negative slope $1-\epsilon$, that is:
\begin{equation}
  \phi^\epsilon(x)
  = x - \epsilon \min(0, x).
\end{equation}
Since we are going to consider only binary classification in the present work, we take $n_L=1$.
We also define $d = n_0$ to denote the input dimension.

\paragraph{Data.}
Data points $(x,y)$ come from a distribution $\DD$.
We assume all $x$ from $\DD$ to lie on a unit ball, $\|x\| \leq 1$, and $y \in \{-1,1\}$.
During the training phase, we sample a set of $m$ points iid from $\DD$ to form a dataset $(X,Y)$, where $X \in \RR^{d \times m}$ and $Y \in \RR^{1 \times m}$.

\paragraph{Training.}
Assuming $\rk X = d$ (which implies $m \geq d$, i.e. the data is abundant), we train our model on $(X,Y)$ with gradient flow to optimize square loss on whitened data, i.e. on $(\tilde X, Y)$ for $\tilde X = \Sigma_X^{-1/2} X$, where $\Sigma_X = \tfrac{1}{m} X X^\trsp \in \RR^{d \times d}$ is an empirical feature correlation matrix.
That is,
\begin{equation}
  \frac{dW^{\epsilon}_l(t)}{dt}
  = -\frac{\partial\left(\frac{1}{2m} \left\|Y - f^\epsilon_{\theta^\epsilon(t)}(\tilde X)\right\|_F^2\right)}{\partial W_l}
  \quad
  \forall l \in [L].
  \label{eq:gf}
\end{equation}
Note that $\tilde X \tilde X^\trsp = m I_d$.

\paragraph{Inference.}
To conform with the above training procedure, we take the model output at a point $x$ to be $f_\theta^\epsilon\left(\Sigma^{-1/2} x\right)$, where $\Sigma$ is a (distribution) feature correlation matrix: $\Sigma = \EE_{(x,y) \sim \DD} (x x^\trsp) \in \RR^{d \times d}$.
We assume this matrix to be known; in practice, we could substitute it with $\Sigma_X$.

\paragraph{Performance measure.}
Recall the definitions of $r$, $r_\gamma$, and $r_\gamma^C$ from \cref{sec:idea}.
Slightly abusing the notation, we define the empirical (train) risk on the dataset $(X,Y)$ and the distribution risk of the model trained for time $t$ as
\begin{equation}
  \hat R^\epsilon(t) = \frac{1}{m} \sum_{k=1}^m r\left(y_k, f_{\theta^\epsilon(t)}^\epsilon(\Sigma^{-1/2} x_k)\right),
  \qquad
  R^\epsilon(t) = \EE_{(x,y) \sim \DD} r\left(y, f_{\theta^\epsilon(t)}^\epsilon(\Sigma^{-1/2} x)\right).
\end{equation}
We define $R^\epsilon_\gamma(t)$ and $R^{C,\epsilon}_\gamma(t)$ analogously to $R^\epsilon(t)$, and $\hat R^\epsilon_\gamma(t)$ and $\hat R^{C,\epsilon}_\gamma(t)$ analogously to $\hat R^\epsilon(t)$.


\subsection{Generalization bound}

We will need the following assumption on the training process:
\begin{assumption}
  \label{ass:loss_proj}
  $\forall t \geq 0$ $\left\|\left(Y - f^\epsilon_{\theta^\epsilon(t)}(\tilde X)\right) \tilde X^\trsp\right\|_F^2 \leq \left\|\left(Y - f^\epsilon_{\theta^\epsilon(0)}(\tilde X)\right) \tilde X^\trsp\right\|_F^2$.
\end{assumption}
Note that $\left\|Y - f^\epsilon_{\theta^\epsilon(t)}(\tilde X)\right\|_F^2 \leq \left\|Y - f^\epsilon_{\theta^\epsilon(0)}(\tilde X)\right\|_F^2$ since we minimize the loss monotonically with gradient flow.
This implies that the above assumption holds automatically, whenever $\tilde X$ is a (scaled) orthogonal matrix (which happens when $m = d$).
It is easy to show that it provably holds for a linear network ($\epsilon = 0$), see \cref{sec:proving_proj_loss_assumption}, and we found this assumption to hold empirically for all of our experiments with nonlinear networks too, see \cref{fig:shallow_loss_drop}.

We are now ready to formulate our main result:
\begin{theorem}
  \label{thm:general_bound_binary}
  Fix $\beta, \gamma > 0$, $t \geq 0$, $\delta \in (0,1)$, $\epsilon \in [0,1]$, and $\kappa \in \{1,2\}$.
  Let $p$ be the floating point arithmetic precision (32 by default).
  Under the setting of \cref{sec:setup} and \cref{ass:loss_proj}, for any weight initialization satisfying $\|W^\epsilon_l(0)\| \leq \beta$ $\forall l \in [L]$, w.p. $\geq 1-\delta$ over sampling the dataset $(X,Y)$,
  \begin{equation}
      \begin{aligned}
          R^\epsilon(t)
          \leq \hat R^\epsilon_\gamma(t) 
          + \Upsilon_\kappa + \frac{\Delta_{\kappa,\beta}(t) \epsilon^\kappa}{\gamma},
      \end{aligned}
      \label{eq:general_bound_binary}
  \end{equation}
  for the terms in the rhs defined as
  \begin{equation}
      \Upsilon_\kappa
      = \sqrt{\frac{\kappa p d \ln 2 + \ln(1/\delta)}{2m}},
      \qquad
      \Delta_{\kappa,\beta}(t) = 
      \Phi_\kappa v_\beta(t) u_\beta^{L-1}(t),
  \end{equation}
  where
  \begin{equation}
      \Phi_\kappa
      = \begin{cases}
          L \sqrt{d} + 1 + (L-1) \rho, & \kappa = 1;\\
              (L-1) \left[
                  (L+1 + (L-1) \rho) \sqrt{d} + 2 (1 + (L-1) \rho)
              \right], & \kappa = 2,
      \end{cases}
  \end{equation}
  where $\rho = \tfrac{\|W^\epsilon_1(0)\|_F}{\|W^\epsilon_1(0)\|}$ is the square root of the stable rank of the input layer at initialization,
  and the definitions of $u_\beta$ and $v_\beta$ are given below.

  Define
  \begin{itemize}
      \item $s = \left\|Y X^\trsp \Sigma_X^{-1/2}\right\|; \quad \bar 1 = 1 + \rho \beta^L$;
      \item For a given $r \geq 0$, $\bar s_r = (1-\epsilon) (s + \rho \beta^L) + \epsilon \sqrt{r} (L-1) (1 + \rho \beta^L)$;
      \item $\bar s = \bar s_1; \quad \hat s = \tfrac{L}{1+(L-1)\rho} \bar s; \quad \bar\beta = \beta \tfrac{\bar s_\rho}{\bar s_1}$.
  \end{itemize}
  We have
  \begin{equation}
      u_\beta(t) = \begin{cases}
          \bar\beta e^{\bar s t}, & L = 2;\\
          \left(\bar\beta^{2-L} - (L-2) \bar s t\right)^{\frac{1}{2-L}}, & L \geq 3;
      \end{cases}
  \end{equation}
  \begin{equation}
      v_\beta(t) 
      = \frac{L-1}{L} \hat s^{\frac{2-L}{L}} u_\beta^{L-1}(t) [w(u_\beta(t)) - w(\beta)] e^{\frac{u_\beta^L(t)}{\hat s}},
  \end{equation}
  where
  \footnote{
      $\Gamma$ is an upper-incomplete gamma-function defined as $\Gamma(s,x) = \int_x^\infty t^{s-1} e^{-t} \, dt$.
  }
  \begin{equation}
      w(u) 
      = -\frac{\bar 1}{\bar s} \Gamma\left(\frac{2-L}{L}, \frac{u^L}{\hat s}\right) - \rho \frac{\hat s}{\bar s} \Gamma\left(\frac{2}{L}, \frac{u^L}{\hat s}\right).
  \end{equation}
\end{theorem}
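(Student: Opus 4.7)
The plan is to interpose a low-dimensional linear proxy between the trained nonlinear network and its distribution risk, then control separately (i) the generalization of the proxy via a counting argument and (ii) the pointwise gap between the nonlinear net and its proxy via an ODE-based perturbation analysis.

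For $\kappa = 1$ I would take the proxy to be the companion linear network $\tilde g_t(x) := f^0_{\theta^0(t)}(\Sigma^{-1/2} x)$, whose end-to-end map is a single linear functional $w_t^\trsp \Sigma^{-1/2} x$ parameterized by $w_t \in \RR^d$; for $\kappa = 2$ I would append the first-order-in-$\epsilon$ correction, adding one more $d$-vector to the proxy and gaining one order of accuracy in $\epsilon$. Using $r \leq r^C_\gamma \leq r_\gamma$ together with the $\tfrac{1}{\gamma}$-Lipschitz continuity of $r^C_\gamma$ in its first argument, one obtains the decomposition
\begin{equation*}
    R^\epsilon(t) \leq \hat R^\epsilon_\gamma(t) + \bigl(R^C_\gamma(\tilde g_t) - \hat R^C_\gamma(\tilde g_t)\bigr) + \frac{2}{\gamma}\sup_{\|x\|\leq 1}\bigl|\tilde f^\epsilon_t(x) - \tilde g_t(x)\bigr|.
\end{equation*}
The middle proxy-generalization term is controlled by Hoeffding's inequality (since $r^C_\gamma \in [0,1]$) and a union bound over the $2^{\kappa p d}$ proxies representable in $p$-bit floating point, yielding $\Upsilon_\kappa$ with probability at least $1-\delta$.

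The technical core is the sup-deviation. I would first derive a priori bounds $\max_l \|W^\epsilon_l(t)\|, \max_l \|W^0_l(t)\| \leq u_\beta(t)$ by differentiating $\|W_l\|$ along gradient flow, using \cref{ass:loss_proj} to replace the loss-Jacobian factor $\|(Y - f^\epsilon_{\theta(t)}(\tilde X))\tilde X^\trsp/m\|$ by its initialization bound $\bar s$, and comparing the resulting coupled system to the scalar ODE $\dot u = \bar s u^{L-1}$ whose closed-form solution is exactly the stated $u_\beta$. I would then bound the trajectory separation $\|W^\epsilon_l(t) - W^0_l(t)\|$ by subtracting the $\epsilon$- and $0$-gradient flows, expanding the right-hand side as a $\kappa$-th-order Taylor polynomial in $\epsilon$ about $\epsilon = 0$, and solving the resulting linear inhomogeneous scalar ODE for the deviation by variation of parameters with integrating factor $\exp(u_\beta^L/\hat s)$; this is where the upper-incomplete gamma terms in $w(\cdot)$ and the exponential prefactor in $v_\beta$ come from. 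Finally, a telescoping identity of the form $\prod_l A_l^\epsilon - \prod_l A_l^0 = \sum_j (\prod_{l < j} A_l^0)(A_j^\epsilon - A_j^0)(\prod_{l > j} A_l^\epsilon)$ propagates the perturbation through the $L$ layers, producing the $u_\beta^{L-1}(t)$ factor and the combinatorial constant $\Phi_\kappa$ (whose $\sqrt{d}$ dependence comes from $\EE\|\Sigma^{-1/2} x\|^2 = d$ and whose $\rho$-dependence comes from the stable rank of the input layer at initialization).

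The main obstacle will be the trajectory-separation bound: the perturbation compounds nonlinearly across $L$ coupled layers, the forcing itself depends on the very weight norms one is trying to control, and the argument closes only through \cref{ass:loss_proj} combined with the scalar comparison ODE for $u_\beta$. Matching the integrating factor $\exp(u_\beta^L/\hat s)$ against the primitive of the forcing is precisely what produces the $\Gamma\bigl(\tfrac{2-L}{L},\,\cdot\bigr)$ and $\Gamma\bigl(\tfrac{2}{L},\,\cdot\bigr)$ terms in $w$; the constants $\Phi_\kappa$ are then pinned down by the telescoping step and by tracking the initialization-vs.-dynamic contributions (the $\bar 1 = 1 + \rho \beta^L$ splitting) layer by layer.
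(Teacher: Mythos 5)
Your overall architecture --- a proxy model, Hoeffding plus a union bound over a $2^{\kappa p d}$-sized finite-precision class, and an ODE comparison under \cref{ass:loss_proj} producing $u_\beta$ and $v_\beta$ --- matches the paper's strategy in spirit, but two of your concrete choices do not reproduce the theorem. First, the paper's first-order proxy is $f^\epsilon_{\theta^0(t)}$, the \emph{nonlinear} architecture evaluated at the linearly-trained weights, not the linear network $f^0_{\theta^0(t)}$; and the deviation is controlled separately in expectation over $\DD$ and on the training set, not by $\sup_{\|x\|\le 1}$. The $\sqrt d$ in $\Phi_\kappa$ comes from $\EE\|\Sigma^{-1/2}x\| \le \sqrt{\EE\|\Sigma^{-1/2}x\|^2} = \sqrt d$, whereas a supremum of the whitened input norm over the unit ball scales with $\lambda_{\min}(\Sigma)^{-1/2}$ and is not bounded by $\sqrt d$, so your decomposition as written cannot yield the stated constants. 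Second, the paper does not Taylor-expand the trajectory separation $W^\epsilon_l(t)-W^0_l(t)$; it bounds $\sup_{\tau\in[0,\epsilon]}\left\|\partial W^\tau_l(t)/\partial\tau\right\| \le v_\beta(t)$ uniformly in the slope parameter $\tau$ of the training problem and then integrates over $\tau$, which is precisely what later enables the second-order argument.

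The genuine gap is your $\kappa=2$ proxy. You propose to ``append the first-order-in-$\epsilon$ correction, adding one more $d$-vector''. But the true first-order correction to the trained network is not a linear function of $x$ (it involves $\min(0,\cdot)$ applied to hidden pre-activations and depends on all the weights), so it is not parameterized by $d$ numbers and the $2^{2pd}$ counting fails; conversely, a generic $d$-parameter linear correction does not obviously improve the deviation from $O(\epsilon)$ to $O(\epsilon^2)$. The paper's construction is specific: $\tilde f^\epsilon_{\theta^0,\theta^\epsilon}(x) = f^\epsilon_{\theta^0}(x) + \bigl(f^\epsilon_{\theta^\epsilon}(\tilde X) - f^\epsilon_{\theta^0}(\tilde X)\bigr)\tilde X^+ x$, i.e.\ the residual of the first-order proxy on the training data, fitted linearly through $\tilde X^+$. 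The $\epsilon^2$ rate then follows from the exact identity $f^0_\theta(x) = f^0_\theta(\tilde X)\tilde X^+ x$ (valid because the $\epsilon=0$ network is linear and $\rk\tilde X = d$), which turns the deviation into a double integral over two slope parameters bounded by mixed second derivatives, each controlled by \cref{lemma:weight_evolution_bound}; the extra class-size factor $2^{pd}$ is simply the number of $p$-bit values of the $1\times d$ correction vector. Without this (or an equivalent) mechanism, your plan establishes neither the $\epsilon^2$ term nor $\Upsilon_2$, so the $\kappa=2$ half of the theorem is not proved.
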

This theorem gives an \emph{a priori} bound for the generalization gap $R^\epsilon - \hat R^\epsilon_\gamma$, i.e. it could be computed without performing the actual training.



\paragraph{Dimension dependency.}
Our bound does not depend on width $n_l$ $\forall l \in [L-1]$, in contrast to the bounds of \citet{dziugaite2017computing,zhou2018nonvacuous,biggs2022non}.
However, both penalty terms of \cref{thm:general_bound_binary}, $\Upsilon_\kappa$ and $\Delta_{\beta,\kappa}(t)$, grow as $\sqrt{d}$ with the input dimension.


\subsection{Choosing the training time}
\label{sec:choosing_training_time}

As we see, $\Delta_{\kappa,\beta}(t)$ diverges super-exponentially as $t \to \infty$ for $L = 2$ and as $t \to \tfrac{\bar\beta^{2-L}}{(L-2) \bar s}$ for $L \geq 3$, so the bound eventually becomes vacuous.
For the bound to make sense, we should be able to find $t$ small enough for $\Delta_{\kappa,\beta}(t)$ to stay not too large, and at the same time, large enough for the training risk $\hat R^\epsilon_\gamma(t)$ to get considerably reduced.

In the present section, we are going to demonstrate that for values of $t$ which correspond to partially learning the dataset (i.e. for which $\hat R^\epsilon_\gamma(t) \in (0,1)$), the last term of the bound, $\Delta_{\kappa,\beta}(t) / \gamma$, admits a finite limit as $\beta\to 0$.\footnote{
    In the linear case ($\epsilon = 0$), this corresponds to the saddle-to-saddle regime of \citet{jacot2021deep}.
}

We do not know how $\hat R^\epsilon_\gamma(t)$ decreases with $t$ in our case.
However, when the network is linear, this can be computed explicitly when either the weight initialization is properly aligned with the data, or the initialization norm $\beta$ vanishes.
We are going to perform our whole analysis for $L = 2$ in the main, and defer the case $L \geq 3$ to \cref{sec:evaluating_bound_at_learning_time}.

That is, consider an SVD: $Y \tilde X^+ = \tfrac{1}{\sqrt{m}} Y X^\trsp (X X^\trsp)^{-1/2} = P S Q^\trsp$, where $P$ and $Q$ are orthogonal and $S$ is diagonal; note that $S_{11} = s$.
Observe also that $s = \|Y \tilde X^+\| \leq \|Y\| \|\tilde X^+\| \leq 1$ since $y = \pm 1$.
\citet{saxe2013exact}\footnote{
  \citet{saxe2013exact} assumed $X X^\trsp = I_d$ and $\|Y X^\trsp\| = s$, while not introducing the factor $\tfrac{1}{m}$ as we do in \cref{eq:gf}.
  It is easy to see that the our gradient flow has exactly the same dynamics as the one studied by \citet{saxe2013exact}.
} have demonstrated for linear nets ($\epsilon = 0$) and $L = 2$ that when the weight initialization is properly aligned with $P$ and $Q$\footnote{
  That is, when $W_2(0) = P \bar S_2^{1/2} R^\trsp$, $W_1(0) = R \bar S_1^{1/2} Q^\trsp$, where $R$ is orthogonal, and $\bar S_1$ and $\bar S_2$ are constructed from $S$ by adding or removing zero rows and columns.
}, $\|W_1^0(t)\| = \|W_2^0(t)\| = \bar u(t)$, where $\bar u$ satisfies\footnote{
  Our $\bar u$ corresponds to $u^{1/(N_l-1)}$ of \citet{saxe2013exact}.
}
\begin{equation}
  \frac{d\bar u(t)}{dt}
  = \bar u(t) (s-\bar u^2(t)),
  \qquad
  \bar u(0) = \beta,
\end{equation}
which gives the solution in implicit form:
\begin{equation}
  t
  = \int \frac{d\bar u}{\bar u (s-\bar u^2)}
  = \frac{1}{2s} \ln\left(\frac{\bar u^2(t) (s - \beta^2)}{\beta^2 (s - \bar u^2(t))}\right).
  \label{eq:1st_mode_evolution_linear_implicit}
\end{equation}
One could resolve $\bar u$ explicitly to get
\begin{equation}
  \bar u(t) = \frac{s e^{2 s t}}{e^{2 s t} - 1 + s/\beta^2}.
  \label{eq:1st_mode_evolution_linear}
\end{equation}
\citet{saxe2013exact} observed this expression to hold with good precision for random (not aligned) initializations when $\beta$ is small enough.
This is supported by further works \citep{li2020towards,jacot2021deep}: when a linear network is initialized close to the origin and the initial weights do not lie on a "bad" subspace, the gradient flow nearly follows the same trajectory as studied by \citet{saxe2013exact}.

Plugging $\bar u^2(t) = \alpha s$ into \cref{eq:1st_mode_evolution_linear_implicit}, we get the time required for a linear network to learn a fraction $\alpha$ of the strongest mode:
\begin{equation}
  t^*_\alpha(\beta)
  = \frac{1}{2s} \ln\left(\frac{\alpha (s-\beta^2)}{(1-\alpha) \beta^2}\right).
\end{equation}
Clearly, the learning time $t^*_\alpha(\beta)$ diverges whenever $\beta \to 0$, or $\alpha \to 1$.

Since $t^*_\alpha(\beta)$ is the time sufficient to learn a network for $\epsilon = 0$, we suppose it also suffices to learn a nonlinear network.
Thus, we are going to evaluate our bound at $t = t^*_\alpha$.
Since we need $\hat R_\gamma(t^*_\alpha) < 1$ for the bound to be non-vacuous, we should take $\gamma$ small relative to $\alpha$.
We consider $\gamma = \alpha^\nu / q$ for $\nu, q \geq 1$.

Since the linear network learning time $t^*_\alpha(\beta)$ is correct for almost all initialization only when $\beta$ vanishes, we are going to work in the limit of $\beta \to 0$.
Since we need $\alpha \in (\beta^2/s, 1)$, otherwise the linear training time is negative, we take $\alpha = \tfrac{r}{s} \beta^\lambda$ for $\lambda \in (0, 2]$ and $r > 1$.

Let us compute the quantities which appear in our bounds, at $t = t^*_\alpha(\beta)$:
\begin{equation}
  \begin{aligned}
      u 
      = \bar\beta \left(\frac{\alpha (s-\beta^2)}{(1-\alpha) \beta^2}\right)^{\frac{\bar s}{2s}}
      = \frac{\bar s_\rho}{\bar s_1} r^{\frac{\bar s}{2s}} \beta^{1 + \frac{\bar s}{s} \left(\frac{\lambda}{2} - 1\right)} (1 + O(\beta)),
  \end{aligned}
\end{equation}
where we omitted the argument $t^*_\alpha(\beta)$ for brevity.

Since $\Gamma(0,x) = -\Ei(-x) = -\ln x + O_{x\to 0}(1)$ and $\Gamma(1,x) = e^{-x} = O_{x\to 0}(1)$, we have $w(\beta) = \frac{\bar 1}{\bar s} \ln(\beta^2) + O(1)$.
Similarly, whenever $1 + \frac{\bar s}{s} \left(\frac{\lambda}{2} - 1\right) > 0$, we have $u = o(\beta)$, and $w(u) = \frac{\bar 1}{\bar s} \ln\left(\beta^{2 + \frac{\bar s}{s} (\lambda - 2)}\right) + O(1)$.

Consider first $\lambda < 2$ and suppose $\nu = 1$.
In this case, $w(u) - w(\beta) = \frac{\bar 1}{s} \left(\frac{\lambda}{2} - 1\right) \ln(\beta^2) + O(1)$, and
\begin{equation}
  \frac{2 u v}{\gamma}
  = \frac{\bar s_\rho^2}{\bar s_1^2} \frac{q r^{\frac{\bar s}{s} - 1} \bar 1}{\beta^{\left(\frac{\bar s}{s} - 1\right) (2-\lambda)}} \left(\frac{\lambda}{2} - 1\right) \ln(\beta^2) (1 + O(1/\ln\beta)).
\end{equation}
Since $\bar s \geq s$, this expression diverges as $\beta \to 0$.
Note that we get the same order divergence even also for $1 + \frac{\bar s}{s} \left(\frac{\lambda}{2} - 1\right) \leq 0$.
Clearly, for $\nu > 1$, we get even faster divergence.

On the other hand, if $\lambda = 2$ then $w(u) - w(\beta) = \frac{\bar 1}{\bar s} \ln\left(\frac{\bar s_\rho^2}{\bar s_1^2} r^{\frac{\bar s}{s}}\right) + O(\beta)$, and
\begin{equation}
  \frac{2 u v}{\gamma}
  = \frac{s^\nu \bar s_\rho^2}{\bar s_1^3} q r^{\frac{\bar s}{s} - \nu} \bar 1 \ln\left(\frac{\bar s_\rho^2}{\bar s_1^2} r^{\frac{\bar s}{s}}\right) \beta^{2(1-\nu)} (1 + O(\beta)).
\end{equation}
We get a finite limit only for $\nu = 1$, i.e. when $\gamma \propto \alpha$.
In this case, the last term of the bound \cref{eq:general_bound_binary} becomes
\begin{equation}
  \frac{\Delta_{\kappa,\beta} \epsilon^\kappa}{\gamma}
  = \frac{\bar 1 s \bar s_\rho^2}{2 \bar s_1^3} q r^{\frac{\bar s}{s} - 1} \Lambda_\kappa \ln\left(\frac{\bar s_\rho^2}{\bar s_1^2} r^{\frac{\bar s}{s}}\right) \epsilon^\kappa (1 + O(\beta)).
\end{equation}

Summing up, we expect the empirical risk $\hat R_\gamma(t^*_\alpha)$ to be smaller than $\tfrac{1}{2}$ for large enough $q$ (i.e. for small $\gamma$ compared to $\alpha$), and the last term to stay finite as $\beta \to 0$ and vanish as $\epsilon^\kappa$.
Therefore as long as $\Upsilon_\kappa$ is not large enough (i.e. when $\kappa p d$ is small compared to $m$), the overall bound becomes non-vacuous for small enough $\epsilon$ at least at the (linear) training time $t^*_\alpha$.
We are going to evaluate our bound empirically in the upcoming section.

\section{Experiments}
\label{sec:experiments}

\begin{figure*}
  \centering
  \includegraphics[width=0.4\linewidth]{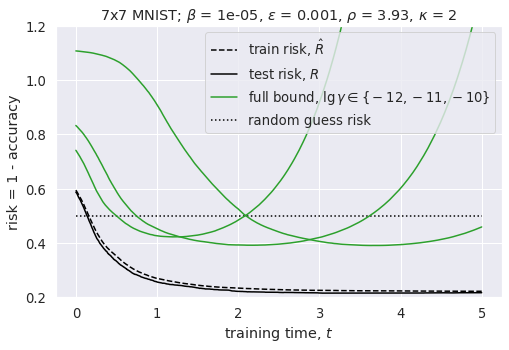}
  \includegraphics[width=0.4\linewidth]{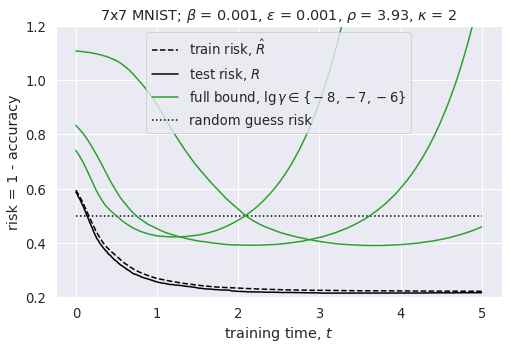}
  \\
  \includegraphics[width=0.4\linewidth]{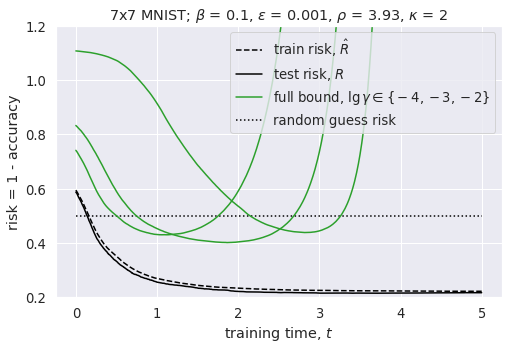}
  \includegraphics[width=0.4\linewidth]{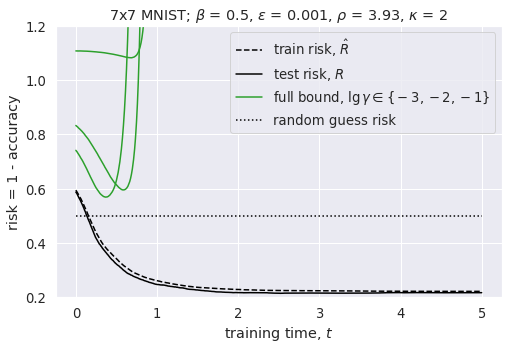}
  \caption{
      \label{fig:shallow_vary_beta}
      We consider 7x7 binary MNIST, $L=2$, $\kappa=2$, $\epsilon=0.001$, and vary $\beta$.
      The bound of \cref{thm:general_bound_binary} converges as $\beta$ vanishes and increases as $\beta$ grows.
      The bound stays non-vacuous for a small enough $\beta$ and a properly choosen $\gamma$.
      We consider $\gamma = \beta^2 / q$ for $q \in \{1, 10, 100\}$.
  }
\end{figure*}
\begin{figure*}
  \centering
  \includegraphics[width=0.4\linewidth]{img/icml2024/7x7MNIST_L=2/beta=1e-3_eps=1e-3_full_rank_input.png}
  \includegraphics[width=0.4\linewidth]{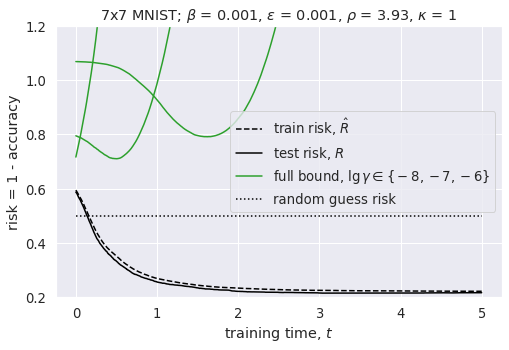}
  \caption{
      \label{fig:shallow_vary_kappa}
      We consider 7x7 binary MNIST, $L=2$, $\beta=0.001$, $\epsilon=0.001$, and compare different kappas of \cref{thm:general_bound_binary}.
      The bound for $\kappa = 2$ is much stronger than that for $\kappa = 1$.
  }
\end{figure*}
\begin{figure*}
  \centering
  \includegraphics[width=0.32\linewidth]{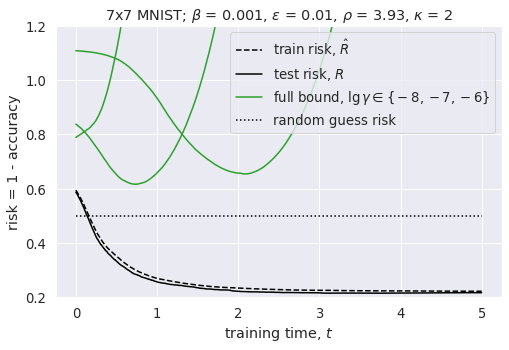}
  \includegraphics[width=0.32\linewidth]{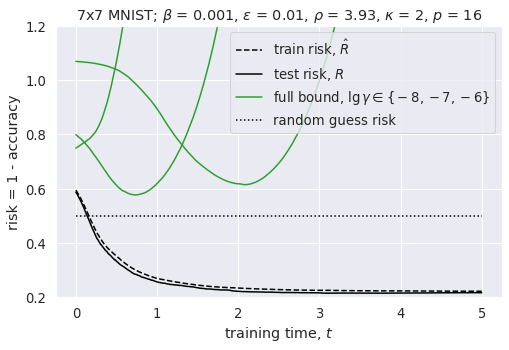}
  \includegraphics[width=0.32\linewidth]{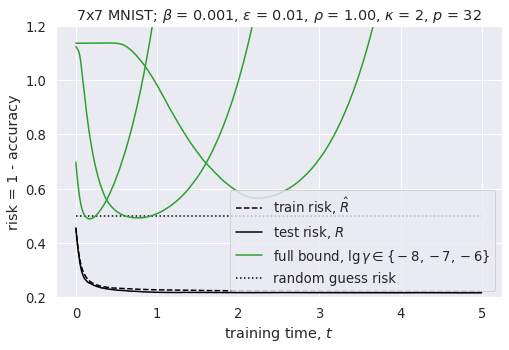}
  \caption{
      \label{fig:shallow_vary_rho_p}
      We consider 7x7 binary MNIST, $L=2$, $\beta=0.001$, $\epsilon=0.01$, $\kappa=2$, and vary the stable rank at initialization $\rho$ and floating point precision $p$.
      Initializing the input layer with a rank one matrix considerably improves the bound.
      Moreover, it also improves the convergence speed.
  }
\end{figure*}
\begin{figure*}
  \centering
  \includegraphics[width=0.32\linewidth]{img/icml2024/7x7MNIST_L=2/beta=1e-3_eps=1e-2_full_rank_input.png}
  \includegraphics[width=0.32\linewidth]{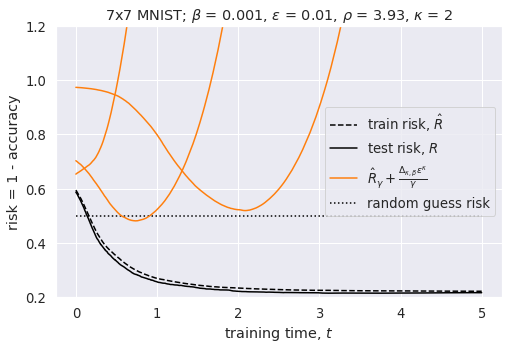}
  \includegraphics[width=0.32\linewidth]{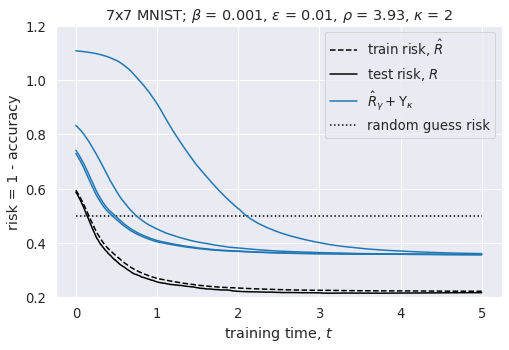}
  \caption{
      \label{fig:shallow_components}
      We consider 7x7 binary MNIST, $L=2$, $\beta=0.001$, $\epsilon=0.01$, $\kappa=2$, and compare different components of the bound.
      The left figure corresponds to the full bound, while for the central one we forget about the generalization gap bound for the proxy model $\Upsilon_\kappa$, and for the rightmost one, we forget about the deviation term $\tfrac{\Delta_{\kappa,\beta}\epsilon^\kappa}{\gamma}$.
      We see that both terms are of the same order; one therefore has to work on reducing both in order to reduce the overall bound.
  }
\end{figure*}
\begin{figure*}
  \centering
  \includegraphics[width=0.32\linewidth]{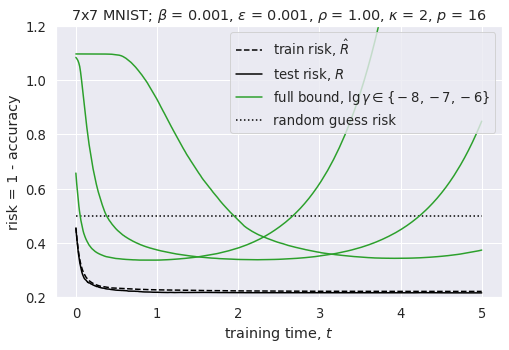}
  \includegraphics[width=0.32\linewidth]{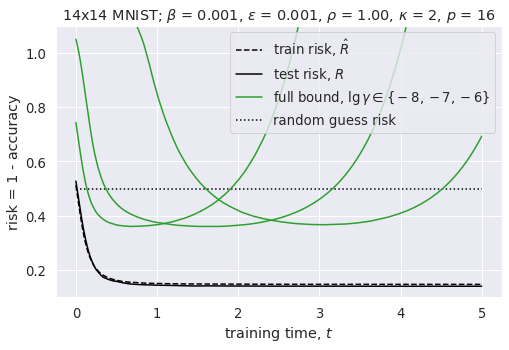}
  \includegraphics[width=0.32\linewidth]{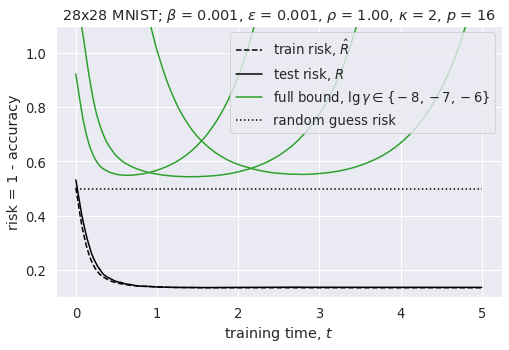}
  \caption{
      \label{fig:shallow_vary_dim}
      We consider binary MNIST, $L=2$, $\beta=0.001$, $\epsilon=0.001$, $\kappa=2$, $p=16$, rank one input layer initialization, and vary image dimensions.
      In this "gentle" scenario, the bound stays non-vacuous for 14x14 MNIST, and only slightly exceeds the random guess risk for the full-sized, 28x28 MNIST.
  }
\end{figure*}
\begin{figure*}
  \centering
  \includegraphics[width=0.32\linewidth]{img/icml2024/7x7MNIST_L=2/beta=1e-3_eps=1e-3_rank_one_input_half_prec.png}
  \includegraphics[width=0.32\linewidth]{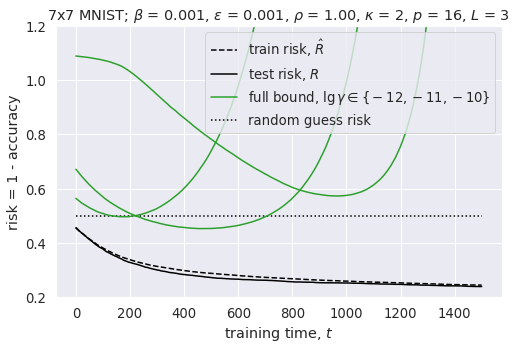}
  \includegraphics[width=0.32\linewidth]{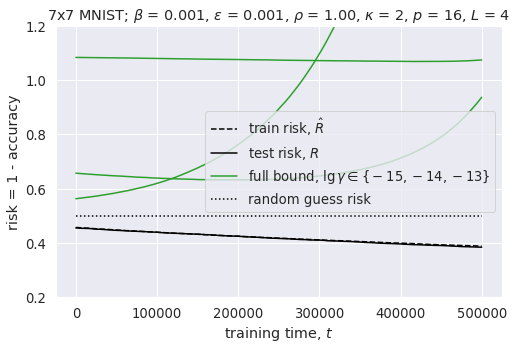}
  \caption{
      \label{fig:vary_depth}
      We consider binary 7x7 MNIST, $\beta=0.001$, $\epsilon=0.001$, $\kappa=2$, $p=16$, rank one input layer initialization, and vary depth.
      Even in this "gentle" scenario, the bound gets considerably worse with depth.
  }
\end{figure*}
\begin{figure*}
  \centering
  \includegraphics[width=0.32\linewidth]{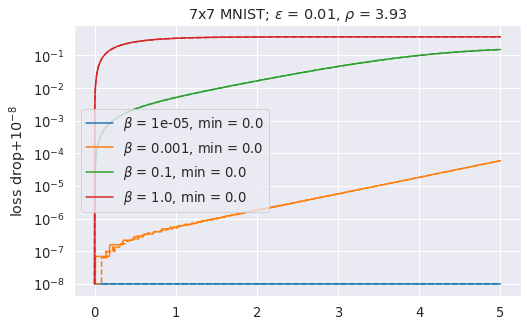}
  \includegraphics[width=0.32\linewidth]{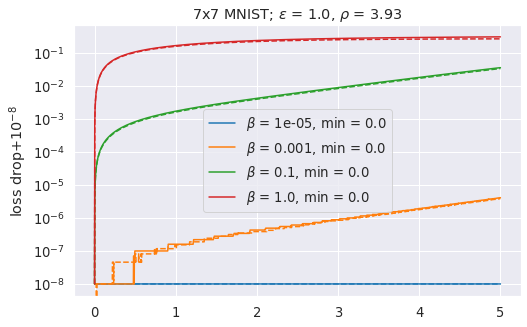}
  \includegraphics[width=0.32\linewidth]{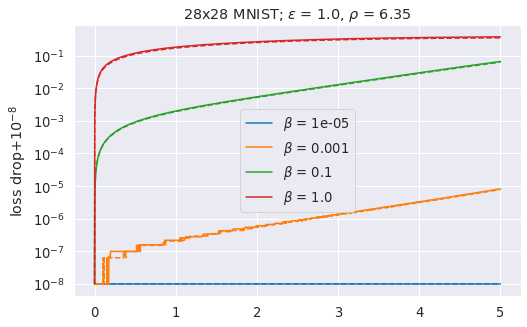}
  \caption{
      \label{fig:shallow_loss_drop}
      We consider binary MNIST, $L=2$, $\beta=0.001$, $\epsilon=0.001$, and measure the differences $\LL(0) - \LL(t)$ (solid lines) and $\LL_X(0) - \LL_X(t)$ (dashed lines).
      Here $\LL(t) = \tfrac{1}{2m} \left\|Y - f^\epsilon_{\theta^\epsilon(t)}(\tilde X)\right\|_F^2$ is the loss at time $t$, while $\LL_X(t) = \tfrac{1}{2m} \left\|\left(Y - f^\epsilon_{\theta^\epsilon(t)}(\tilde X)\right) \tilde X^\trsp\right\|_F^2$ is the "projected" loss at time $t$.
      While $\LL(t)$ should clearly decrease for gradient descent with small enough steps, it is not \emph{a priori} clear that $\LL_X(t)$ also does.
      As we see from the plots, it does for $\beta$ large and small, and for $\epsilon$ up to $1$, which corresponds to conventional ReLU activations.
      These results validate our \cref{ass:loss_proj}.
      Note that we added a small quantity $10^{-8}$ in order to make zero visible.
  }
\end{figure*}

\paragraph{Setup.}
We consider an $L$-layer bias-free fully-connected network of width 64 and train it to classify 0-4 versus 5-9 digits of MNIST (i.e. $m=60000$).
In order to approximate the gradient flow dynamics, we run gradient descent with learning rate $0.001$.
By default, we take $L=2$, the floating point precision to be $p=32$, downsample the images to 7x7, and initialize the layers randomly in a standard Pytorch way (plus, we rescale the weights to match the required layer norm $\beta$).
For some experiments, we consider deeper networks, half-precision $p=16$, downsample not so aggresively, or enforce the input layer weight matrix to have rank 1.

\paragraph{Observations:}
\begin{itemize}
  \item When we take $\kappa=2$ and $\beta \leq 0.1$, the bound becomes nonvacuous up to $\epsilon=0.001$ (\cref{fig:shallow_vary_beta}) and even up to $\epsilon=0.01$ if we enforce $\rho=1$ (\cref{fig:shallow_vary_rho_p});
  \item We can have a non-vacuous bound also for 14x14 MNIST if we take $\epsilon=0.001$, half-precision ($p=16$), and enforce $\rho=1$ (\cref{fig:shallow_vary_dim}); the bound slightly exceeds the random guess risk for the full-sized, 28x28 MNIST;
  \item The bound for $\kappa = 2$ is much stronger than that for $\kappa = 1$ (\cref{fig:shallow_vary_kappa});
  \item The bound improves and converges as $\beta$ vanishes (\cref{fig:shallow_vary_beta});
  \item \cref{ass:loss_proj} holds empirically (\cref{fig:shallow_loss_drop});
  \item The bound improves if we enforce $\rho=1$; this also results in faster convergence (\cref{fig:shallow_vary_rho_p});
  \item The bound slightly improves for the half floating point precision (\cref{fig:shallow_vary_rho_p});
  \item For $\epsilon=0.01$ at the training time, all components of our bound are of the same order, while for larger $\epsilon$, the last one starts to dominate (\cref{fig:shallow_components});
  \item The bound deteriorates consideraly when we increase depth (\cref{fig:vary_depth}).
\end{itemize}

\section{Proof of the main result}
\label{sec:proof_main}


\subsection{Proof of \texorpdfstring{\cref{thm:general_bound_binary}}{Theorem 4.2} for \texorpdfstring{$\kappa = 1$}{kappa = 1}}
\label{sec:proof_main_kappa=1}

We start with approximating our learned network $f^\epsilon_{\theta^\epsilon}$ with $f^\epsilon_{\theta^0}$, i.e. with a nonlinear network that uses the weights learned by the linear one.
This approximation deviates from the original network the following way: $\forall x \in \RR^d, \epsilon \in [0,1]$,
\begin{equation}
  \begin{aligned}
      \frac{1}{\epsilon} \left\|f^\epsilon_{\theta^\epsilon}(x) - f^\epsilon_{\theta^0}(x)\right\|
      = \frac{1}{\epsilon} \left\|\int_0^\epsilon \frac{\partial f^\epsilon_{\theta^\tau}(x)}{\partial\tau} \, d\tau\right\|
      \leq \sup_{\tau \in [0,\epsilon]} \left\|\frac{\partial f^\epsilon_{\theta^\tau}(x)}{\partial\tau}\right\|
      \leq \sup_{\tau \in [0,\epsilon]} \sum_{l=1}^L \left\|\frac{\partial W^{\tau}_l}{\partial\tau}\right\| \prod_{k\neq l} \|W^{\tau}_k\| \|x\|
  \end{aligned}
  \label{eq:1st_order_deviation_bound}
\end{equation}
since we use Leaky ReLUs.
We omitted the argument $t$ for brevity.
We get a similar deviation bound on the training dataset:
\begin{equation}
  \begin{aligned}
      \frac{1}{\epsilon} \left\|f^\epsilon_{\theta^\epsilon}(\tilde X) - f^\epsilon_{\theta^0}(\tilde X)\right\|
      \leq \sup_{\tau \in [0,\epsilon]} \left\|\frac{\partial W^{\tau}_1 \tilde X}{\partial\tau}\right\|_F \prod_{k=2}^L \|W^{\tau}_k\|
      + \sup_{\tau \in [0,\epsilon]} \sum_{l=2}^L \left\|\frac{\partial W^{\tau}_l}{\partial\tau}\right\| \|W^{\tau}_1 \tilde X\|_F \prod_{k \in [2:L] \setminus \{l\}} \|W^{\tau}_k\|.
  \end{aligned}
  \label{eq:1st_order_deviation_bound_train}
\end{equation}
We complete the above deviation bound with bounding weight norms and norms of weight derivatives:
\begin{lemma}
  Under \cref{ass:loss_proj}, $\forall \tau \in [0,1], t \geq 0$ $\tfrac{1}{\sqrt{m}} \|W^{\tau}_1(t) \tilde X\|_F \leq \rho u(t)$, $\tfrac{1}{\sqrt{m}} \left\|\frac{\partial W^{\tau}_1 \tilde X}{\partial\tau}(t)\right\|_F \leq v(t)$, and $\forall l \in [L]$ $\|W^{\tau}_l(t)\| \leq u(t)$, $\left\|\frac{\partial W^{\tau}_l}{\partial\tau}(t)\right\| \leq v(t)$ for $u$, $v$ defined in \cref{thm:general_bound_binary}.
  \label{lemma:weight_evolution_bound}
\end{lemma}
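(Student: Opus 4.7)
The plan is to derive coupled differential inequalities for $\|W^\tau_l(t)\|$, $\|W^\tau_1(t)\tilde X\|_F$, and their $\tau$-derivatives, then dominate these by closed-form solutions of ODEs matching $u_\beta(t)$ and $v_\beta(t)$. Two facts are used throughout: since $\phi^\tau$ for $\tau\in[0,1]$ is $1$-Lipschitz with layerwise Jacobian of operator norm at most $1$, backpropagation operators are bounded by products of weight norms; and the residual norms $\|Y-f^\tau_{\theta^\tau(t)}(\tilde X)\|_F$ and $\|(Y-f^\tau_{\theta^\tau(t)}(\tilde X))\tilde X^\trsp\|_F$ are controlled by their initial values via loss monotonicity and \cref{ass:loss_proj}, respectively.

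\textbf{Bound on $u$.} First I would use backprop to write $\partial L^\tau/\partial W_l$ as a product of Jacobians, residual, and previous-layer activation, and bound each piece by weight-norm products together with the residual bounds above. For $l=1$ the gradient ends in $\tilde X^\trsp$, so \cref{ass:loss_proj} applies directly and the Frobenius norm gives an extra $\rho$ factor (the stable rank of the first layer); for $l\geq 2$ the cruder loss-monotonicity bound suffices. Substituting the initial values, which collect precisely into $\bar s$ and $\bar\beta$, and taking $u(t):=\max_l \|W^\tau_l(t)\|$, I obtain $du/dt\leq \bar s\,u^{L-1}$ with $u(0)\leq\bar\beta$, whose explicit solution is the stated $u_\beta(t)$.

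\textbf{Bound on $v$.} I would next differentiate the gradient flow equation with respect to $\tau$:
\begin{equation*}
\frac{d}{dt}\frac{\partial W^\tau_l(t)}{\partial\tau} = -\sum_k \frac{\partial^2 L^\tau}{\partial W_l\,\partial W_k}\cdot\frac{\partial W^\tau_k(t)}{\partial\tau} \;-\; \frac{\partial^2 L^\tau}{\partial W_l\,\partial\tau}.
\end{equation*}
The implicit (first) term contributes a linear operator on $v(t):=\max_l\|\partial W^\tau_l(t)/\partial\tau\|$ whose norm is of order $u^{2L-2}$, since the loss is degree $2L$ in the weights (Leaky ReLU is piecewise linear, so second derivatives of activations vanish almost everywhere). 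The explicit (second) term, arising from $\partial\phi^\tau/\partial\tau=-\min(0,\cdot)$ bounded pointwise by $|\cdot|$, contributes a forcing of order $\bar 1 + \rho u^L$ after the same initial-residual bookkeeping as above, where $\bar 1=1+\rho\beta^L$ captures the initial activation scale and $\rho u^L$ captures its growth during training. Combining yields a scalar linear inhomogeneous differential inequality of the form
\begin{equation*}
\frac{dv}{dt} \;\leq\; \frac{L\bar s}{\hat s}\, u^{2L-2}\, v \;+\; C\big(\bar 1 + \rho u^L\big).
\end{equation*}
Using $du/dt=\bar s\,u^{L-1}$ to change variables from $t$ to $u$ converts this into a first-order linear ODE in $u$ with integrating factor $\exp(-u^L/\hat s)$. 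With the initial condition $v(0)=0$ (the weights do not depend on $\tau$ at initialization) and the substitution $y=u^L/\hat s$, the two forcing components $\bar 1\,u'^{1-L}$ and $\rho u'$ integrate to $\Gamma\big((2-L)/L,\,u^L/\hat s\big)$ and $\Gamma\big(2/L,\,u^L/\hat s\big)$ respectively, producing $w(u)-w(\beta)$ and reproducing $v_\beta(t)$ after reassembling the prefactor $\tfrac{L-1}{L}\hat s^{(2-L)/L} u^{L-1} e^{u^L/\hat s}$.

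\textbf{Main obstacle.} The principal difficulty is the asymmetric treatment of the first layer. An operator-norm bound $\|W_1\|\leq u$ would be too loose, because the first-layer gradient is effectively multiplied by $\tilde X\tilde X^\trsp=mI$, which ``sees'' the full Frobenius mass of $W_1$; this forces the tighter bound $\|W_1\|_F\leq\rho u$ and then propagates asymmetrically into $\bar s$, $\bar 1$, $\bar\beta$, and the two distinct $\Gamma$-terms in $w(u)$. A secondary challenge is the careful bookkeeping needed to verify that the backprop constants and the two kinds of residual bounds (the tight \cref{ass:loss_proj} bound versus the cruder loss-monotonicity bound) coalesce exactly into the compact $(\bar 1,\rho,\hat s,\bar s)$ form of the theorem. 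Once the differential inequalities are set up correctly and the first-layer asymmetry is threaded through, the remaining change of variable and integration is essentially mechanical.
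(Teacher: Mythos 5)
Your overall strategy is the same as the paper's (differential inequalities in $t$, residual control via loss monotonicity plus \cref{ass:loss_proj}, a Gr\"onwall-type bound giving $u$, then a linear inhomogeneous ODE for $v$ solved by an integrating factor and a change of variables $t\to u$, producing the incomplete gamma functions). However, your derivation of the $u$-bound has a genuine gap. With $u(t):=\max_l\|W^\tau_l(t)\|$, the differential inequality you can actually prove is \emph{not} $du/dt\le\bar s\,u^{L-1}$: the rate of $\|W^\tau_l\|$ for $l\ge 2$ involves $\tfrac{1}{\sqrt m}\|W^\tau_1\tilde X\|_F\le\rho u$, so the max-based argument yields $du/dt\le\bar s_\rho\,u^{L-1}$ with $u(0)=\beta$, i.e.\ the larger rate $\bar s_\rho$ in the exponent. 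Since $\beta e^{\bar s_\rho t}$ is not dominated by $\bar\beta e^{\bar s t}$ for large $t$ (when $\rho>1$), this does not establish the lemma with the theorem's $u_\beta$. The paper avoids this by tracking a \emph{weighted combination}: it introduces $g_1\ge(1-\tau)(s+\rho\beta^L)\|W^\tau_1\|+\tau(L-1)(1+\rho\beta^L)\tfrac1{\sqrt m}\|W^\tau_1\tilde X\|_F$ (whose own growth rate depends only on the deeper layers) and $g_2\ge\max_{l\ge2}\|W^\tau_l\|$, solves the coupled system $dg_1/dt=\bar s^2g_2^{L-1}$, $dg_2/dt=g_1g_2^{L-2}$ (with a first integral $g_1^2-\bar s^2g_2^2=\const$), and then dominates it by the single ODE $du/dt=\bar s\,u^{L-1}$ started at $\bar\beta=\beta\,\bar s_\rho/\bar s_1$; the bounds $\|W^\tau_1\|\le\beta-\bar\beta+u\le u$ and $\tfrac1{\sqrt m}\|W^\tau_1\tilde X\|_F\le\rho u$ then follow by integrating the $W_1$ inequality separately. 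This is precisely the device that moves the first-layer $\rho$ into the initial condition $\bar\beta$ instead of the exponent, and it is missing from your sketch (your ``main obstacle'' paragraph names the asymmetry but does not supply the mechanism).

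A second, smaller mismatch: the two residual bounds are not split by layer index (``\cref{ass:loss_proj} for $l=1$, crude monotonicity for $l\ge2$''). In the paper they are split \emph{within every layer's gradient} via the decomposition $D^\tau=(1-\tau)\mathbf 1+\tau\Delta$: the all-ones part lets $\tilde X^\trsp$ reach the residual, so the projected-loss bound $\|\Xi^\tau\tilde X^\trsp\|\le s+\rho\beta^L$ applies, while the mask-deviation part blocks this and needs $\|\Xi^\tau\|_F\|\tilde X\|\le 1+\rho\beta^L$; only this combination produces the constants $\bar s_r=(1-\tau)(s+\rho\beta^L)+\tau(L-1)(1+\rho\beta^L)\sqrt r$. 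Using the crude bound for all of layers $l\ge2$, as you propose, would replace $s+\rho\beta^L$ by $1+\rho\beta^L$ in those rates and the constants would not collect into $\bar s$. Finally, in your $v$-inequality the homogeneous coefficient should be $(1+(L-1)\rho)u^{2L-2}+\bar s(L-1)u^{L-2}$ and the forcing $(L-1)u^{L-1}[\bar 1+\rho u^L]$; the $u^{L-2}$ term and the $u^{L-1}$ forcing factor you dropped are exactly what generate the $u^{L-1}$ prefactor in $v_\beta$, so as written your inequality would not integrate to the stated formula, even though you quote the correct final form.
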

See \cref{sec:proof_weight_evolution_bound_shallow,sec:proof_weight_evolution_bound_deep} for the proof.

Now we can relate the risk of the original model with the risk of the approximation.
Since $r \leq r^C_\gamma \leq r_\gamma$ and $r^C_\gamma(\cdot,y)$ is $1/\gamma$-Lipschitz for any fixed $y$,
\begin{equation}
  \begin{aligned}
      R(f^\epsilon_{\theta^\epsilon}) - \hat R_\gamma(f^\epsilon_{\theta^\epsilon})
      &\leq R^C_\gamma(f^\epsilon_{\theta^\epsilon}) - \hat R^C_\gamma(f^\epsilon_{\theta^\epsilon})
      \\&\leq R^C_\gamma(f^\epsilon_{\theta^0}) - \hat R^C_\gamma(f^\epsilon_{\theta^0}) 
      + \frac{1}{\gamma} \EE \left\|f^\epsilon_{\theta^\epsilon}(\tilde x) - f^\epsilon_{\theta^0}(\tilde x)\right\| + \frac{1}{\gamma m} \left\|f^\epsilon_{\theta^\epsilon}(\tilde X) - f^\epsilon_{\theta^0}(\tilde X)\right\|_1,
  \end{aligned}
  \label{eq:R_bound}
\end{equation}
where the expectation is over the data distribution $\DD$, and $\tilde x = \Sigma^{-1/2} x$ is the actual input of the network.

As for the last term, the deviation on the train dataset, we use that $\|z\|_1 \leq \sqrt{m} \|z\|$ for any $z \in \RR^m$, and \cref{eq:1st_order_deviation_bound_train}.
As for the deviation on the test dataset, due to \cref{eq:1st_order_deviation_bound} and \cref{lemma:weight_evolution_bound}, in order to bound the last term, it suffices to bound $\EE \|\tilde x\|$.
Since $\Sigma = \EE [x x^\trsp]$, we get
\begin{equation}
  \begin{aligned}
      \EE \|\Sigma^{-1/2} x\|^2
      = \EE \left[x^\trsp \Sigma^{-1} x\right]
      = \EE \tr\left[x x^\trsp \Sigma^{-1}\right]
      = \tr[I_d]
      = d.
  \end{aligned}
\end{equation}
This gives $\EE \|\Sigma^{-1/2} x\| \leq \sqrt{\EE \|\Sigma^{-1/2} x\|^2} \leq \sqrt{d}$.

The first two terms is a generalization gap of the proxy-model.
We use a simple counting-based bound:
\begin{equation}
  \begin{aligned}
      R^C_\gamma\left(f^\epsilon_{\theta^0(t)}\right) - &\hat R^C_\gamma\left(f^\epsilon_{\theta^0(t)}\right)
      \leq \sqrt{\frac{\ln|\FF^\epsilon_1(t; \theta(0))| - \ln\delta}{2m}},
  \end{aligned}
\end{equation}
w.p. $\geq 1-\delta$ over sampling the dataset $(X,Y)$, where $\FF^\epsilon_1(t; \theta(0))$ denotes the set of functions representable with $f^\epsilon_{\theta^0(t)}$ for a given initial weights $\theta(0)$, where $\theta^0(t)$ is a result of running the gradient flow \cref{eq:gf} for time $t$.
As long as we work with finite precision, this class is finite:
\begin{lemma}
  $\forall \theta(0), \epsilon, t \geq 0$, $|\FF^\epsilon_1(t; \theta(0))| \leq 2^{p d}$.
\end{lemma}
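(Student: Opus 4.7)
The plan is to argue that for a fixed initialization $\theta(0)$ and a fixed training time $t$, the trained linear weights $\theta^0(t)$ depend on the training dataset $(X,Y)$ only through the $d$-dimensional vector $Y \tilde X^\trsp \in \RR^{1 \times d}$, and then to count how many such vectors are representable in a $p$-bit floating point format.

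The first step is to write out the gradient flow loss for a linear network ($\epsilon = 0$) on the whitened data:
\begin{equation*}
    \LL(\theta) = \frac{1}{2m} \|Y - W_L \cdots W_1 \tilde X\|_F^2.
\end{equation*}
Expanding the square and using the crucial identity $\tilde X \tilde X^\trsp = m I_d$ (noted right after \cref{eq:gf}), the cross and quadratic terms rearrange to
\begin{equation*}
    \LL(\theta) = \frac{\|Y\|^2}{2m} - \frac{1}{m} \tr\bigl((Y \tilde X^\trsp)^\trsp W_L \cdots W_1\bigr) + \frac{1}{2} \|W_L \cdots W_1\|_F^2.
\end{equation*}
Hence $\LL$, and therefore each partial derivative $\partial \LL / \partial W_l$ driving \cref{eq:gf}, depends on the dataset only through $Y \tilde X^\trsp$ (the constant $\|Y\|^2$ drops out of the gradient). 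Consequently, for any fixed initialization $\theta(0)$, the whole gradient flow trajectory $\theta^0(\cdot)$ is a function of the single vector $Y \tilde X^\trsp \in \RR^{d}$.

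The second step is the counting argument. Every entry of $Y \tilde X^\trsp$ is stored as a $p$-bit floating point number, so $Y \tilde X^\trsp$ takes at most $(2^p)^d = 2^{pd}$ distinct values over all possible datasets $(X,Y)$ (with $\rk X = d$). For each such value, $\theta^0(t)$ is uniquely determined, and so is the proxy-function $f^\epsilon_{\theta^0(t)}$. Taking the image of this map gives $|\FF^\epsilon_1(t;\theta(0))| \leq 2^{pd}$.

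I do not anticipate serious obstacles: the identification $\tilde X \tilde X^\trsp = m I_d$ does all the work, collapsing the entire dependence of linear-net gradient flow on the data into a single $d$-dimensional sufficient statistic. The only thing to be slightly careful about is the role of finite precision — the claim is purely about the cardinality of the set of possible inputs $Y \tilde X^\trsp$ that a $p$-bit representation can take, not about the numerical accuracy of simulating the gradient flow, so no stability analysis is needed.
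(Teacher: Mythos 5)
Your proposal is correct and follows essentially the same argument as the paper: the trained linear weights (and hence each proxy function $f^\epsilon_{\theta^0(t)}$) are determined by the $d$-dimensional quantity $Y \tilde X^\trsp$, which in $p$-bit precision takes at most $2^{pd}$ values. Your explicit expansion of the whitened-data loss to justify that $Y \tilde X^\trsp$ is the only data dependence is just a more detailed spelling-out of what the paper asserts in one line.
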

\begin{proof}
  Since we run our gradient flow \cref{eq:gf} on whitened data to optimize squared loss, the initial weights are fixed, and the network we train is linear, the resulting weights depend only on $Y \tilde X^\trsp$ which has $d$ parameters.
  Since each function in our class is completely defined with the resulting weights, and each weight occupies $p$ bits, we get the above class size.
\end{proof}
We could have used a classical VC-dimension-based bound instead \citep{vapnik1971}.
However, we found it to be numerically larger compared to the counting-based bound above.

This finalizes the proof of \cref{thm:general_bound_binary} for $\kappa = 1$.

\subsection{Proof of \texorpdfstring{\cref{thm:general_bound_binary}}{Theorem 4.2} for \texorpdfstring{$\kappa = 2$}{kappa = 2}}

What changes for $\kappa = 2$ is a proxy-model.
Consider the following:
\begin{equation}
  \tilde f^\epsilon_{\theta^0,\theta^\epsilon}(x)
  = f^\epsilon_{\theta^0}(x) + \left(f^\epsilon_{\theta^\epsilon}(\tilde X) - f^\epsilon_{\theta^0}(\tilde X)\right) \tilde X^+ x.
  \label{eq:2nd_order_proxy_model}
\end{equation}
That is, we take the same proxy-model as before, but we add a linear correction term.
This correction term aims to fit the proxy model to the original one, $f^\epsilon_{\theta^\epsilon}$, on the training dataset $\tilde X$.
We prove the following lemma in \cref{sec:proof_lemma_proxy_deviation_order2}:
\begin{lemma}
  \label{lemma:proxy_deviation_order2}
  Under the premise of \cref{lemma:weight_evolution_bound},
  $\forall t \geq 0$ $\forall \epsilon \in [0,1]$ $\forall x \in \RR^d$ we have
  \begin{equation}
      \begin{aligned}
          \left\|f^\epsilon_{\theta^\epsilon(t)}(x) - \tilde f^\epsilon_{\theta^0(t),\theta^\epsilon(t)}(x)\right\|
          \leq (L-1) (L+1 +\rho (L-1)) u^{L-1}(t) v(t) \|x\| \epsilon^2;
      \end{aligned}
  \end{equation}
  \begin{equation}
      \begin{aligned}
          \left\|f^\epsilon_{\theta^\epsilon(t)}(\tilde X) - \tilde f^\epsilon_{\theta^0(t),\theta^\epsilon(t)}(\tilde X)\right\|
          \leq 2(L-1) (1 +\rho (L-1)) u^{L-1}(t) v(t) \sqrt{m} \epsilon^2.
      \end{aligned}
  \end{equation}
\end{lemma}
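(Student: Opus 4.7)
The plan is to do a first-order Taylor expansion in the auxiliary parameter $\tau$ that slides the training-time activation slope from $1$ (linear network) down to $1-\epsilon$ (the actual network), and then show that the $\tau$-derivative $\partial_\tau f^\epsilon_{\theta^\tau}$ is itself linear in $x$ up to an $O(\epsilon)$ correction. Because $\tilde f^\epsilon$ already exactly captures any linear-in-$x$ correction through its $(\cdot)\tilde X^+ x$ term, only the $O(\epsilon)$ nonlinear residue survives, and integrating it over $\tau \in [0,\epsilon]$ produces the advertised $O(\epsilon^2)$ bound.

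Concretely, applying the fundamental theorem of calculus to the curves $\tau \mapsto f^\epsilon_{\theta^\tau(t)}(x)$ and $\tau \mapsto f^\epsilon_{\theta^\tau(t)}(\tilde X)$ and combining them with \cref{eq:2nd_order_proxy_model} gives
\begin{equation*}
    f^\epsilon_{\theta^\epsilon(t)}(x) - \tilde f^\epsilon_{\theta^0(t),\theta^\epsilon(t)}(x) = \int_0^\epsilon \left[\partial_\tau f^\epsilon_{\theta^\tau(t)}(x) - \partial_\tau f^\epsilon_{\theta^\tau(t)}(\tilde X)\, \tilde X^+ x\right] d\tau,
\end{equation*}
and likewise with $x$ replaced by $\tilde X$ on both sides. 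Writing the network as $f^\epsilon_\theta(x) = W_L D^{L-1} W_{L-1} \cdots D^1 W_1 x$ with $D^k$ the diagonal LeakyReLU slope matrices (entries $1$ or $1-\epsilon$), the derivative decomposes as a sum over $l \in [L]$ of terms $T_l$ obtained by swapping $W_l$ for $\partial_\tau W_l^\tau$. Defining $T_l^{\mathrm{lin}}$ analogously but with every $D^k$ replaced by $I$, each $T_l^{\mathrm{lin}}$ is a linear function of $x$, and the identity $\tilde X \tilde X^+ = I_d$ yields $T_l^{\mathrm{lin}}(\tau, x) = T_l^{\mathrm{lin}}(\tau, \tilde X)\, \tilde X^+ x$. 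Hence the linear parts cancel inside the integrand and only the nonlinear residue $\dot f_{\mathrm{nonlin}}(\tau,\cdot) := \sum_l (T_l - T_l^{\mathrm{lin}})(\tau,\cdot)$ remains.

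To bound this residue, I would note that $\|D^k - I\| \leq \epsilon$ and that there are $L-1$ slope matrices to swap; a standard hybrid telescoping argument combined with the operator-norm bounds $\|W_l^\tau\|,\|\partial_\tau W_l^\tau\| \leq u,v$ from \cref{lemma:weight_evolution_bound} gives $|T_l - T_l^{\mathrm{lin}}|(\tau, x) \leq (L-1)\epsilon\, u^{L-1} v \|x\|$ pointwise. For the batched evaluation at $\tilde X$, the layer-one bounds $\|W_1^\tau \tilde X\|_F \leq \rho u \sqrt{m}$ and $\|\partial_\tau W_1^\tau \tilde X\|_F \leq v\sqrt{m}$ produce the $\rho$ factor only on the $l \geq 2$ terms, yielding $\|\dot f_{\mathrm{nonlin}}(\tau, \tilde X)\|_F \leq (L-1)\epsilon\, u^{L-1} v \sqrt{m}\,[1+(L-1)\rho]$. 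Closing the pointwise bound needs one more Cauchy--Schwarz step: $|\dot f_{\mathrm{nonlin}}(\tau,\tilde X)\, \tilde X^+ x| \leq \|\dot f_{\mathrm{nonlin}}(\tau,\tilde X)\|_F \|\tilde X^+\|\|x\|$ with $\|\tilde X^+\| = 1/\sqrt{m}$ (since $\tilde X\tilde X^\trsp = mI_d$); integrating over $[0,\epsilon]$ the constants combine as $L(L-1) + (L-1)[1+(L-1)\rho] = (L-1)[L+1+(L-1)\rho]$. For the batched bound, the projector $\tilde X^+ \tilde X$ has operator norm $\leq 1$, so the triangle inequality yields the prefactor $2$ and the constant $2(L-1)[1+(L-1)\rho]$.

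The main technical subtlety is that each $D^{k,\tau}(x)$ jumps when a pre-activation changes sign as $\tau$ varies, so $\tau \mapsto \partial_\tau f^\epsilon_{\theta^\tau}(x)$ is only defined almost everywhere. This is harmless because $\tau \mapsto \theta^\tau(t)$ is smooth and $\phi^\epsilon$ is Lipschitz, so $\tau \mapsto f^\epsilon_{\theta^\tau}(x)$ is absolutely continuous and the integral representation is valid; the telescoping estimate of the nonlinear residue then holds on the full-measure set where the slope matrices are locally constant, which suffices for the integral bound.
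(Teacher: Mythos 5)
Your proposal is correct and takes essentially the same route as the paper: both write the error as $\int_0^\epsilon\bigl[\partial_\tau f^\epsilon_{\theta^\tau}(x) - \partial_\tau f^\epsilon_{\theta^\tau}(\tilde X)\,\tilde X^+ x\bigr]d\tau$, cancel the linear-in-$x$ part using $\tilde X \tilde X^+ = I_d$, and then plug in the norm bounds of \cref{lemma:weight_evolution_bound} together with $\|\tilde X^+\| = 1/\sqrt{m}$, arriving at exactly the same constants. The only difference is how the second factor of $\epsilon$ is extracted: the paper integrates once more in the forward slope, bounding the mixed partial $\partial^2 f^\rho_{\theta^\tau}/\partial\rho\,\partial\tau$ over $\rho \in [0,\epsilon]$, whereas you telescope over the $L-1$ slope matrices using $\|D^k - I\| \le \epsilon$ — a finite-difference version of the same estimate.
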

The deviation now scales as $\epsilon^2$ instead of $\epsilon$.

What remains is to bound the size of $\FF^\epsilon_2(t; \theta(0))$, which denotes the set of functions representable with our new proxy-model $\tilde f^\epsilon_{\theta^0(t),\theta^\epsilon(t)}$ for given initial weights $\theta(0)$.
Since $\tilde f^\epsilon_{\theta^0(t),\theta^\epsilon(t)}$ is a sum of $f^\epsilon_{\theta^0(t)}$ and a linear model, its size is at most $2^{p d}$ times larger:
\begin{equation}
  |\FF^\epsilon_2(t; \theta(0))|
  \leq |\FF^\epsilon_1(t; \theta(0))| 2^{p d}
  \leq 2^{2p d}.
\end{equation}
This finalizes the proof of \cref{thm:general_bound_binary} for $\kappa = 2$.

\subsection{Proof of \texorpdfstring{\cref{lemma:weight_evolution_bound}}{Lemma 6.1}}
\label{sec:proof_weight_evolution_bound_shallow}

Below, we are going to present the proof only for $L = 2$, and defer the case of $L \geq 3$ to \cref{sec:proof_weight_evolution_bound_deep}.

\subsubsection{Weight norms}

Let us expand the weight evolution \cref{eq:gf}:
\begin{equation}
      \frac{dW^{\tau}_1}{dt}
      = \left[D^\tau \odot W^{\tau,\trsp}_2 \Xi^\tau\right] \tilde X^\trsp,
  \qquad
      \frac{dW^{\tau}_2}{dt}
      = \Xi^\tau \left[D^{\tau} \odot W^{\tau}_1 \tilde X\right]^\trsp,
\end{equation}
where we define
\begin{equation}
  \Xi^\tau
  = \frac{1}{m} \left(Y - W^\tau_2 \left[D^{\tau} \odot W^{\tau}_1 \tilde X\right]\right),
\end{equation}
and $D^{\tau} = (\phi^\epsilon)'\left(W^{\tau}_1 \tilde X\right)$ is a $n \times m$ matrix with entries equal to $1$ or $1-\tau$.
We express it as $D^\tau = (1-\tau) 1_{n \times m} + \tau \Delta$, where $\Delta$ is a $n \times m$ 0-1 matrix.

Let us bound the evolution of weight norms:
\begin{equation}
  \begin{aligned}
      \frac{d\left\|W^{\tau}_1\right\|}{dt}
      \leq \left\|\frac{dW^{\tau}_1}{dt}\right\|
      \leq (1-\tau) \left\|W^{\tau}_2\right\| \left\|\Xi^\tau \tilde X^\trsp\right\|
      + \tau \left\|\Delta \odot W^{\tau,\trsp}_2 \Xi^\tau\right\| \left\|\tilde X\right\|.
  \end{aligned}
\end{equation}
Since multiplying by a 0-1 matrix elementwise does not increase Frobenius norm, we get
\begin{equation}
  \left\|\Delta \odot W^{\tau,\trsp}_2 \Xi^\tau\right\|
  \leq \left\|\Delta \odot W^{\tau,\trsp}_2 \Xi^\tau\right\|_F
  \leq \|W^\tau_2\| \|\Xi^\tau\|_F.
\end{equation}
Noting that $\Xi^\tau$ and $\Xi^\tau \tilde X^\trsp$ are row matrices, this results in
\begin{equation}
  \begin{aligned}
      \frac{d\left\|W^{\tau}_1\right\|}{dt}
      \leq \left((1-\tau) \|\Xi^\tau \tilde X^\trsp\| + \tau \|\Xi^\tau\| \|\tilde X\|\right)\|W^\tau_2\|.
  \end{aligned}
\end{equation}
By a similar reasoning, 
\begin{equation}
  \begin{aligned}
      \frac{d\left\|W^{\tau}_2\right\|}{dt}
      \leq (1-\tau) \|\Xi^\tau \tilde X^\trsp\| \|W^\tau_1\| + \tau \|\Xi^\tau\| \|W^\tau_1 \tilde X\|_F.
  \end{aligned}
\end{equation}


Consider the following system of ODEs:
\begin{equation}
    \frac{dg_1(t)}{dt}
    = \bar s^2 g_2(t),
    \quad
    \frac{dg_2(t)}{dt}
    = g_1(t),
    \qquad
    g_1(0) = \beta \bar s_\rho;
    \quad
    g_2(0) = \beta.    
\end{equation}
We then make use of the following lemma which we prove in \cref{sec:proof_lemma_loss_bounds}:
\begin{lemma}
  \label{lemma:loss_bounds}
  $\|\Xi_\tau\| = \|\Xi_\tau\|_F \leq \tfrac{1}{\sqrt{m}} (1 + \rho \beta^L)$.
  If we additionally take \cref{ass:loss_proj} then $\|\Xi_\tau \tilde X^\trsp\| = \|\Xi_\tau \tilde X^\trsp\|_F \leq s + \rho \beta^L$.
\end{lemma}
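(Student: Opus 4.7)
First note that $\Xi^\tau \in \RR^{1 \times m}$ (since both $Y$ and $f^\epsilon_{\theta^\tau}(\tilde X)$ are $1 \times m$) is a row vector, and $\Xi^\tau \tilde X^\trsp \in \RR^{1 \times d}$ is likewise a row vector; so the spectral and Frobenius norms coincide in both cases, and the equalities in the statement are automatic. It thus suffices to bound the Frobenius norms.

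For the first claim, I would use the fact that gradient flow on square loss (\cref{eq:gf}) monotonically decreases $\tfrac{1}{2m} \|Y - f^\epsilon_{\theta^\tau(t)}(\tilde X)\|_F^2$ in $t$, giving $\|Y - f^\epsilon_{\theta^\tau(t)}(\tilde X)\|_F \leq \|Y - f^\epsilon_{\theta^\tau(0)}(\tilde X)\|_F$. A triangle inequality together with $\|Y\|_F = \sqrt{m}$ reduces the task to bounding the initial output $\|f^\epsilon_{\theta^\tau(0)}(\tilde X)\|_F$. I would peel off layers one at a time: the first layer uses $\|W_1(0)\tilde X\|_F \leq \|W_1(0)\|_F \|\tilde X\| \leq \rho\beta \cdot \sqrt{m}$ (with $\|\tilde X\| = \sqrt{m}$ from $\tilde X \tilde X^\trsp = mI_d$, and $\|W_1(0)\|_F = \rho \|W_1(0)\| \leq \rho\beta$), and each subsequent layer uses $\|W_l(0) A\|_F \leq \|W_l(0)\| \|A\|_F \leq \beta \|A\|_F$, together with $\|\phi^\tau(A)\|_F \leq \|A\|_F$ (which holds because $\phi^\tau$ is $1$-Lipschitz with $\phi^\tau(0)=0$ for $\tau\in[0,1]$). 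This yields $\|f^\epsilon_{\theta^\tau(0)}(\tilde X)\|_F \leq \rho\beta^L \sqrt{m}$, so $\|Y - f^\epsilon_{\theta^\tau(t)}(\tilde X)\|_F \leq (1 + \rho\beta^L)\sqrt{m}$, and dividing by $m$ gives $\|\Xi^\tau\|_F \leq (1 + \rho\beta^L)/\sqrt{m}$.

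For the second claim, I would invoke \cref{ass:loss_proj} to pull the projected residual back to initialization: $\|(Y - f^\epsilon_{\theta^\tau(t)}(\tilde X)) \tilde X^\trsp\|_F \leq \|(Y - f^\epsilon_{\theta^\tau(0)}(\tilde X)) \tilde X^\trsp\|_F$. A triangle inequality splits the right-hand side as $\|Y \tilde X^\trsp\|_F + \|f^\epsilon_{\theta^\tau(0)}(\tilde X) \tilde X^\trsp\|_F$. Using $\tilde X^+ = \tilde X^\trsp/m$ (from $\tilde X\tilde X^\trsp = mI_d$) to align with the SVD normalization in \cref{sec:choosing_training_time}, the first term equals $m s$; the second is bounded by $\|f^\epsilon_{\theta^\tau(0)}(\tilde X)\|_F \|\tilde X\| \leq \rho\beta^L \sqrt{m} \cdot \sqrt{m} = \rho\beta^L m$ via the same layer-by-layer argument. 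Dividing by $m$ yields $\|\Xi^\tau \tilde X^\trsp\|_F \leq s + \rho\beta^L$.

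The proof is essentially careful bookkeeping of norm inequalities, so there is no genuine obstacle. The only point that requires care is keeping the $m$-dependence straight: $\|\tilde X\|$ and $\|Y\|_F$ both scale as $\sqrt{m}$, $\|Y \tilde X^\trsp\|_F$ scales as $m$, and the external $1/m$ in $\Xi^\tau$ is what produces the clean final bounds $(1 + \rho\beta^L)/\sqrt{m}$ and $s + \rho\beta^L$.
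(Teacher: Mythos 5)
Your proposal is correct and follows essentially the same route as the paper's proof: identify $\Xi^\tau$ and $\Xi^\tau\tilde X^\trsp$ as row vectors, use monotone loss decrease under gradient flow (resp.\ \cref{ass:loss_proj}) to reduce to the initialization, and then bound the initial residual by the triangle inequality with $\|Y\|_F=\sqrt{m}$, $\|Y\tilde X^\trsp\|_F=ms$, and the layer-wise estimate $\|f_{\theta(0)}(\tilde X)\|_F\leq\rho\beta^L\sqrt{m}$. The only cosmetic difference is that you invoke the $1$-Lipschitzness of $\phi$ directly where the paper phrases it as elementwise multiplication by the $0$--$1$-type mask matrices $D^\tau_l$ not increasing the Frobenius norm.
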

This lemma implies $g_1(t) \geq (1-\tau) (s + \rho \beta^2) \|W^\tau_1(t)\| + \tau (1 + \rho \beta^2) \|W^\tau_1(t) \tilde X\|_F$ and $g_2(t) \geq \|W^\tau_2(t)\|$.

The above system of ODEs could be solved analytically:
\begin{equation}
  g_1(t) = \beta \sqrt{\bar s_\rho - \bar s_1} \cosh\left(\bar s t + \tanh^{-1}\left(\frac{\bar s_1}{\bar s_\rho}\right)\right),
  \qquad
  g_2(t) = \sqrt{\bar\beta^2 - \beta^2} \sinh\left(\bar s t + \tanh^{-1}\left(\frac{\bar s_1}{\bar s_\rho}\right)\right),
\end{equation}
where $\bar\beta = \beta \tfrac{\bar s_\rho}{\bar s_1}$.
This gives the bound for the input layer weight norms:
\begin{equation}
  \begin{aligned}
      \|W_1^\tau(t)\|
      \leq \beta + \bar s \int_0^t g_2(t) \, dt
      = \beta - \bar\beta + \sqrt{\bar\beta^2 - \beta^2} \cosh\left(\bar s t + \tanh^{-1}\left(\frac{\bar s_1}{\bar s_\rho}\right)\right).
  \end{aligned}
\end{equation}

For further analysis, we will need simpler-looking bounds.
Consider a looser bound: $g_2(t) \leq u(t)$ and $g_1(t) \leq \bar s u(t)$, where
\begin{equation}
  \frac{du(t)}{dt}
  = \bar s u(t),
  \qquad
  u(0) = \beta \frac{\bar s_\rho}{\bar s_1}.
\end{equation}
This ODE solves as $u(t) = \bar\beta e^{\bar s t}$.

So, we have $\|W_2^\tau(t)\| \leq u(t)$.
As for the input layer weights, we get
\begin{equation}
  \|W_1^\tau(t)\|
  \leq \beta + \bar s \int_0^t u(t) \, dt
  = \beta - \bar\beta + u(t)
\end{equation}
Similarly, $\frac{1}{\sqrt{m}} \|W_1^\tau(t) \tilde X\|_F = \beta \rho - \bar\beta + u(t)$.
As we do not want additive terms, we bound from above: $\|W_1^\tau(t)\| \leq u(t)$ and $\tfrac{1}{\sqrt{m}} \|W_1^\tau(t) \tilde X\| \leq \rho u(t)$.


\subsubsection{Norms of weight derivatives}

In \cref{sec:bounding_derivatives}, we follow the same logic to demonstrate that $\left\|\tfrac{d W^{\tau}_1}{d\tau}\right\|$, $\tfrac{1}{\sqrt{m}} \left\|\tfrac{d W^{\tau}_1}{d\tau} \tilde X\right\|_F$, and $\left\|\tfrac{d W^{\tau}_2}{d\tau}\right\|$ are all bounded by the same $v$ which satisfies
\begin{equation}
  \begin{aligned}
      \frac{dv}{dt}
      = v \left[(1 + \rho) u^2 + \bar s\right]
      + u \left[\bar 1 + \rho u^2\right],
      \quad
      v(0) = 0.
  \end{aligned}
\end{equation}
It is a linear ODE in $v(t)$, and $u(t)$ is given: $u(t) = \bar\beta e^{\bar s t}$.
We solve it in \cref{sec:solving_ode} to get $v(t)$ from \cref{thm:general_bound_binary}.

\section{Conclusion}

We have derived a novel generalization bound for LekyReLU networks.
Our bound could be evaluated before the actual training and does not depend on network width.
Our bound becomes non-vacuous for partially-trained nets with activation functions close to being linear.

\subsubsection*{Broader Impact Statement}

This is a theoretical work studying generalization in neural nets; its main result is a form of a generalization guarantee.
Having a good generalization guarantee is necessary to make the models we use in practice more reliable.
We do not foresee any negative societal or ethical impact our research could potentially cause.

\bibliography{references}
\bibliographystyle{tmlr}

\appendix

\section{Discussion}
\label{sec:discussion}

\subsection{Assumptions}
\label{sec:assumptions_discussion}

\paragraph{Whitened data.}
Overall, the assumption on whitened data is not necessary for a result similar to \cref{thm:general_bound_binary} to hold.
We assume the data to be whitened for two reasons.
First, it legitimates the choice of training time $t^*_\alpha(\beta)$ since it is based on the analysis of \citet{saxe2013exact}, which assumed the data to be whitened.
If we dropped it, we could still evaluate the bound of \cref{thm:general_bound_binary} at $t = t^*_\alpha(\beta)$, but it would be less clear whether the trainining risk $\hat R_\gamma$ becomes already small by this time.

Second, we had to bound $\|\tilde X\|$ throughout the proof of \cref{thm:general_bound_binary} and $\|\tilde X^+\|$ in the proof of \cref{lemma:proxy_deviation_order2}.
For whitened data, these are simply $\sqrt{m}$ and $1/\sqrt{m}$, which is a clear dependence on $m$, making the final bound look cleaner.
Otherwise, they would be random variables whose dependence on $m$ would be less obvious.

Third, if we considered training on the original dataset $(Y,X)$ instead of the whitened one, $(Y,\tilde X)$, we would have to know $Y X^\trsp$ and $X X^\trsp$ in order to determine $\theta^0(t)$ for a given $t$ and initialization $\theta^0(0)$.
These two matrices have $d + \tfrac{d(d+1)}{2}$ parameters, compared to just $d$ for $Y \tilde X^\trsp$.
This way, $\Upsilon_\kappa$ would grow as $d$ instead of $\sqrt{d}$.

\paragraph{Gradient flow.}
We expect our technique to follow through smoothly for finite-step gradient descent.
Introducing momentum also seems doable.
However, generalizing it to other training procedures, e.g. the ones which use normalized gradients, might pose problems since it is not clear how to bound the norm of the elementwise ratio of two matrices reasonably.

\paragraph{\cref{ass:loss_proj}.}
We use this assumption to prove the second part of \cref{lemma:loss_bounds}.
If we dropped it, the bound would be $\|\Xi^\tau \tilde X^\trsp\|_F \leq \|\Xi^\tau\|_F \|\tilde X^\trsp\| \leq 1+\rho \beta^L$ instead of $s+\rho \beta^L$.
This would result in larger exponents in the definition $u$ in \cref{thm:general_bound_binary}.

As an argument in favor of this assumption, we demonstrate it empirically first (\cref{fig:shallow_loss_drop}), and we prove it for the linear case after, see \cref{sec:proving_proj_loss_assumption}.

\subsection{Proof}

We expect the bounds on weight norms $u(t)$ to be quite loose since we use \cref{lemma:loss_bounds} to bound the loss.
This lemma bounds the loss with its value at the initialization, while the loss should necessarily decrease.
If we could account for the loss decrease, the resulting $u(t)$ would increase with a lower exponent, or even stay bounded, as $\bar u(t)$, which corresponds to a linear model, does.
This way, we would not have to assume $\epsilon$ to vanish as $\beta$ vanishes in order to keep the bound non-diverging for small $\beta$ at the training time $t^*_\alpha(\beta)$.
Also, the general bound of \cref{thm:general_bound_binary} would diverge with training time $t$ much slower.
We leave it for future work.

As we see from our estimates, $\Upsilon_\kappa$ becomes the main bottleneck of our bound for small $\epsilon$.
The bound we used for $\Upsilon_\kappa$ is very naive; we believe that better bounds are possible.

Indeed, since the proxy corresponding to $\kappa=1$ depends only on $\theta^0$, the trained linear model weights which one could obtain explicitly, this proxy, $f^\epsilon_{\theta^0}$, is also given explicitly.
One could try to estimate its generalization gap explicitly, as done for kernel methods \citep{bordelon2024dynamical} and for more complex models, which are closer to realisic neural nets \citep{bordelon2024feature}.
We leave this direction for future work.


\subsection{Other architectures}

As becomes apparent from the proof in \cref{sec:proof_lemma_proxy_deviation_order2}, the proxy-model for $\kappa=2$, \cref{eq:2nd_order_proxy_model}, deviates from the original model $f^\epsilon_{\theta^\epsilon}$ as $O(\epsilon^2)$ for any map $(\epsilon, \theta,x) \to f^\epsilon_\theta(x)$ as long as the following holds:
\begin{enumerate}
  \item $f^0_\theta(x)$ is linear in $x$ for any $\theta$;
  \item $\frac{\partial^2 f^\epsilon_\theta(x)}{\partial\epsilon \partial\theta}$ is continuous as a function of $(\epsilon, \theta,x)$;
  \item the result of learning $\theta^\epsilon(t)$ is differentiable in $\epsilon$ for any $t$.
\end{enumerate}
This is directly applicable to convolutional networks with no other nonlinearities except for ReLU's; in partiular, without max-pooling layers.
One may introduce max-poolings by interpolating between average-poolings (which are linear) for $\epsilon=0$ and max-poolings for $\epsilon=1$.
This is not applicable to Transformers \citep{vaswani2017attention} since attention layers are inherently nonlinear: queries and keys have to be multiplied.

Compared to the fully-connected case of the present work, our bound might become even tighter for convolutional nets since $d$ becomes the number of color channels (up to $3$) instead of the whole image size in pixels.
However, the corresponding proxy-models might be over-simplistic: the linear net they will deviate from is just a global average-pooling followed by a linear $\RR^d \to \RR$ map.
We leave exploring the convolutional net case for future work.

\section{How far could we get with our approach?}
\label{sec:optimistic_scenario}

\begin{figure*}
  \centering
  \includegraphics[width=0.4\linewidth]{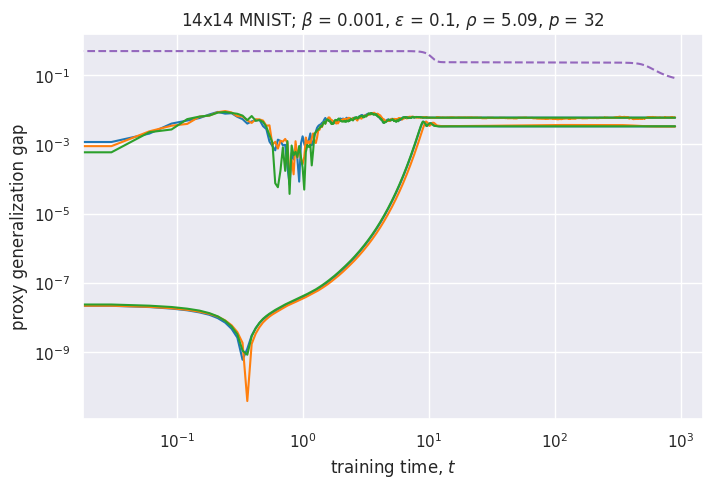}
  \includegraphics[width=0.4\linewidth]{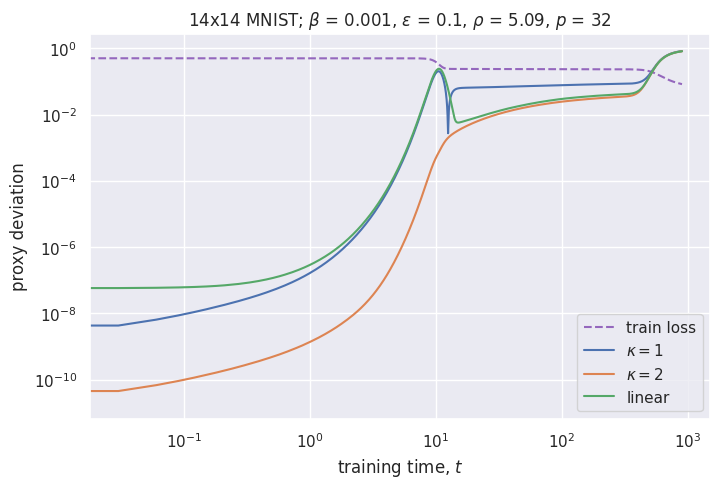}\\
  \includegraphics[width=0.4\linewidth]{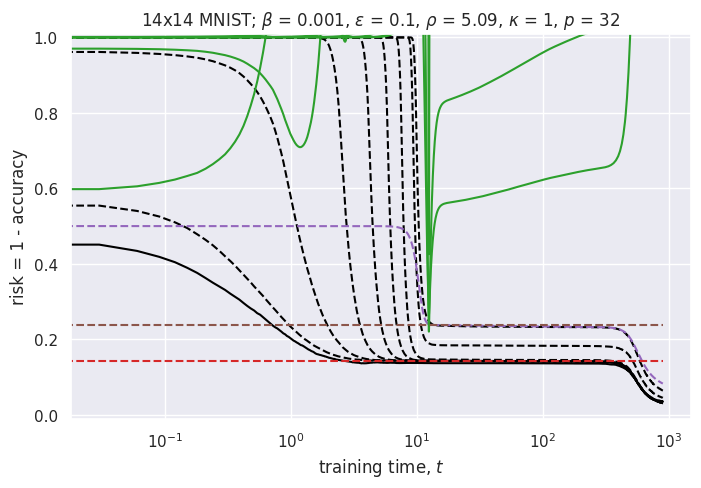}
  \includegraphics[width=0.4\linewidth]{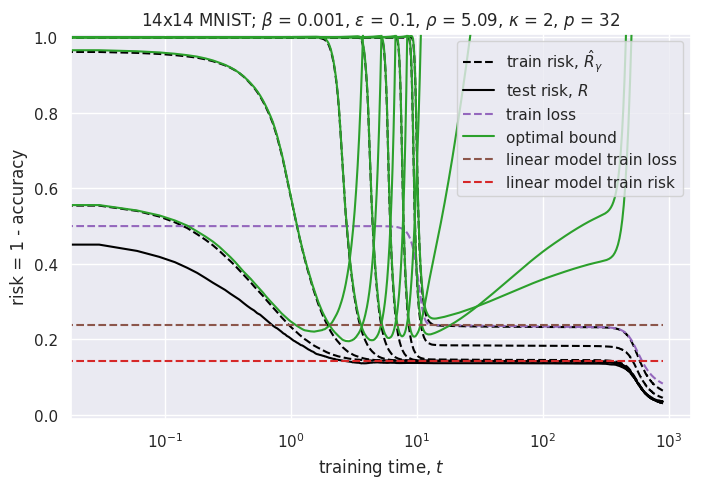}
  \caption{
      We examine the optimistic bound of \cref{eq:perfect_proxy_based_bound} for the proxy-models proposed in our paper: the $\kappa=1$ one, $f^\epsilon_{\theta^0}$, the $\kappa=2$ one of \cref{eq:2nd_order_proxy_model}, and also a linear proxy, $f^0_{\theta^0}$.
      The bottom row demonstrates the full bound (green lines), while the top one depicts the two components of the bound, namely, the proxy model generalization gap $R^C_\gamma(g) - \hat R^C_\gamma(g)$ and the proxy model deviation $\EE |f(x)-g(x)| + \hat{\EE} |f(x)-g(x)|$, separately.
      Different lines of the same color (e.g. solid green and dashed black lines on the bottom row) correspond to different values of $\gamma$.
      Proxy generalization gap stays low during the whole training (top left figure), while the train risk and the model deviation over gamma contribute significantly (bottom row, two groups of lines correspond to the minimal and the maximal $\gamma$ we considered).
      The optimistic bound for the 1st-order proxy (bottom left) gets non-vacuous only at the moment when GF escapes the origin and reaches the linear model loss.
      The bound for the 2nd-order proxy (bottom right) becomes non-vacuous soon after the original model becomes non-vacuous (but still stays near the origin), and stays non-vacuous until the model starts exploiting its nonlinearity to reduce the loss below the optimal linear model loss level (the last drop of purple and black lines).
      \label{fig:optimistic_bound_eps=0.1_all}
  }
\end{figure*}

\begin{figure*}
  \centering
  \includegraphics[width=0.4\linewidth]{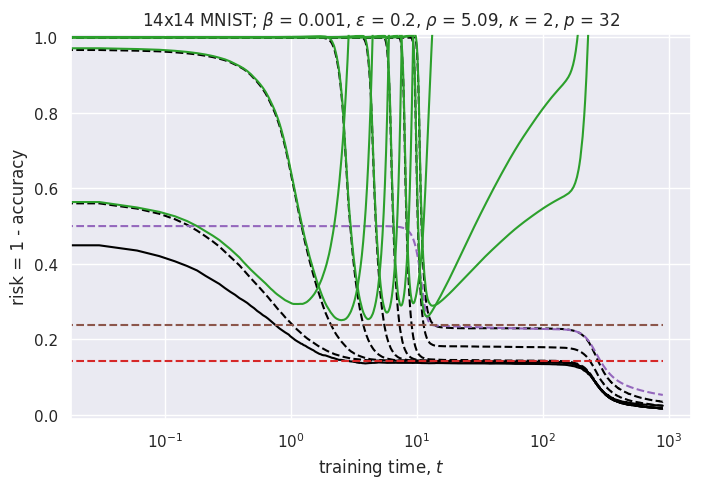}
  \includegraphics[width=0.4\linewidth]{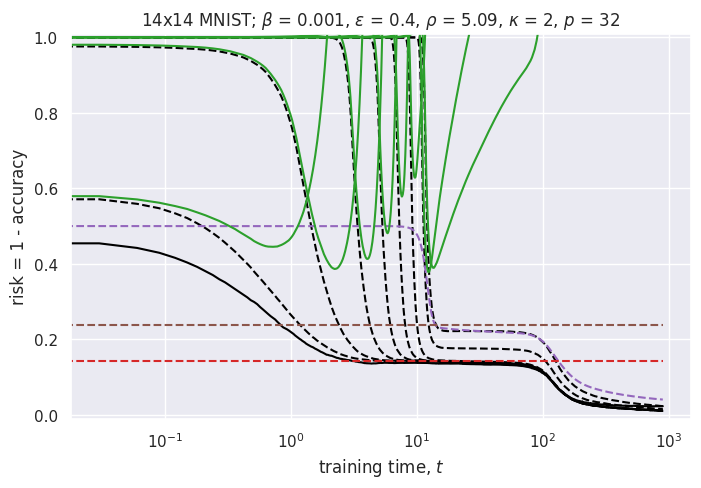}
  \caption{
      The optimistic bound of \cref{eq:perfect_proxy_based_bound} based on our $\kappa=2$ proxy stays non-vacuous up to $\epsilon=0.4$ until the gradient flow starts "exploiting" the nonlinearity (the last drop of purple and black lines).
      \label{fig:optimistic_bound_kappa=2_vary_eps}
  }
\end{figure*}

Recall our general proxy-based generalization bound of \cref{eq:generalization_gap_bound_using_proxy}:
\begin{equation}
    R[f] \leq \hat R_\gamma[f]
    + \left(R^C_\gamma[g] - \hat R^C_\gamma[g]\right)
    + \frac{1}{\gamma} \EE_{x \sim \DD} |f(x) - g(x)|
    + \frac{1}{\gamma} \frac{1}{m} \sum_{k=1}^m |f(x_k) - g(x_k)|.
      \label{eq:perfect_proxy_based_bound}
\end{equation}
Informally, we say that performance of $f$ is worse than that of $g$ at most by some deviation term.

The bound ends up to be good whenever (a) the generalization gap of $g$ could be well-bounded, and (b) $g$ does not deviate from $f$ much.
That is why we considered proxy-models based on linear learned weights: their generalization gap could be easily bounded analytically and they do not deviate much from corresponding leaky ReLU nets as long as ReLU negative slopes are close to one.

The biggest conceptual disadvantage of this approach is that, given both $f$ and $g$ learn the training dataset, we have no chance proving that $f$ performs better than $g$, we could only prove that $f$ performs \emph{not much worse} than $g$.
Do the proxy-models we use in the present paper perform well, and how much do they deviate from original models?
Our main theoretical result, \cref{thm:general_bound_binary}, bounds the proxy-model generalization gap and the deviation from above.
These bounds are arguably not optimal.
It is therefore instructive to examine how well the bound would perform if we could estimate \cref{eq:perfect_proxy_based_bound} exactly.

\subsection{Empirical validation}

\paragraph{Setup.}
We work under the same setup as in \cref{sec:experiments}\footnote{
  We also downsample MNIST images to 14x14 instead of 7x7.
  The reason why we do it is that on one hand, we wanted to test our bounds on more realistic scenarios, while on the other, $X$ does not appear to be full-rank for the original 28x28 MNIST.
}, but instead of evaluating the bound of \cref{thm:general_bound_binary}, we actually train a linear model with exactly the same procedure as for the original model, in order to get trained linear weights $\theta^0$.
We then evaluate the proxy-models considered in the present work: (1) the one for $\kappa=1$, $f^\epsilon_{\theta^0}$, (2) the one for $\kappa=2$, see \cref{eq:2nd_order_proxy_model}, and also (3) the linear network, $f^0_{\theta^0}$.
We then evaluate the rhs of \cref{eq:perfect_proxy_based_bound} using a test part of the MNIST dataset.
For this "optimistic" bound, we consider much larger values of epsilon: $\epsilon \geq 0.1$, i.e. our model much less "nearly-linear" now than before.

\paragraph{Figures.}
We present the results on \cref{fig:optimistic_bound_eps=0.1_all,fig:optimistic_bound_kappa=2_vary_eps}.
Dashed lines correspond to the train set, while solid ones correspond to the test set.
Black lines are risks of the actual trained model $f^\epsilon_{\theta^\epsilon(t)}$: $R(t)$ and $\hat R_\gamma(t)$, respectively.
Green lines are our "optimistic" bound \cref{eq:perfect_proxy_based_bound} evaluated at different values of $\gamma$.
Purple lines denote MSE loss of the actual trained model.

We also put three baselines on the plots.
The dotted black line is the classification risk (and the MSE loss) of a zero model $f \equiv 0$.
The brown dashed line is the MSE (train) loss of the optimal linear model, $f: x \to Y \tilde X^+ x$.
Finally, the red dashed line is the (train) classification risk of the optimal linear model.

\paragraph{Training phases.}
As we observe on risk plots (\cref{fig:optimistic_bound_kappa=2_vary_eps} and the bottom row of \cref{fig:optimistic_bound_eps=0.1_all}), the training process could be divided into three phases.
During the first phase, the risk decreases until it reaches the risk of the optimal linear model, while the loss stays at the level of $f \equiv 0$.
This indicates that while the weights stay very close to the origin, the network outputs already align with the outputs of the optimal linear model.
During the second phase, both the loss and the risk stay at the level of the optimal linear model.
As for the following phase, both the risk and the loss drop below the optimal linear model level.
Therefore from this point on, the network starts to "exploit" its nonlinearity in order to reduce the train loss.

\paragraph{Observations:}
\begin{itemize}
  \item The generalization gap stays negligible for all models and $\gamma$'s considered (\cref{fig:optimistic_bound_eps=0.1_all}, top left);
  \item The proxy-model for $\kappa=2$ approximates the original model best among all three proxies considered (\cref{fig:optimistic_bound_eps=0.1_all}, top right);
  \item While the linear approximation deviates from the original model more than the one for $\kappa=2$ during the first phase, their deviations are similar during the subsequent phases;
  \item At the same time, the $\kappa=1$ approximation deviates more than that of $\kappa=2$ during the second phase;
  \item The transition between the first and the second phases results in a nonmonotonic behavior of the deviation from the original model for $\kappa=1$ and linear proxy-models;
  \item The resulting optimistic bound for $\kappa=2$ (green lines of \cref{fig:optimistic_bound_eps=0.1_all}, bottom right) stays non-vacuous during the first two phases for $\epsilon=0.1$, while this is not the case for $\kappa=1$ (green lines of \cref{fig:optimistic_bound_eps=0.1_all}, bottom left);
  \item The optimistic bound for $\kappa=2$ stays non-vacuous up to $\epsilon=0.4$ (green lines of \cref{fig:optimistic_bound_kappa=2_vary_eps}).
\end{itemize}

It is tempting to assume that the weights $\theta^\epsilon$ follow the same trajectory as the weights of the linear model, $\theta^0$, during the first two phases.
However, if it was the case, the $\kappa=1$ proxy-model, $f^\epsilon_{\theta^0}$, would coincide with the original one, $f^\epsilon_{\theta^\epsilon}$, during this period.
Then their quality would be the same; however, \cref{fig:optimistic_bound_eps=0.1_all}, top right, demonstrates the opposite.

\section{Missing calculations in \texorpdfstring{\cref{sec:proof_main}}{Section 6}}
\label{sec:calculations}

\subsection{Proof of \texorpdfstring{\cref{lemma:proxy_deviation_order2}}{Lemma 6.3}}
\label{sec:proof_lemma_proxy_deviation_order2}

We have:
\begin{equation}
    \begin{aligned}
        \frac{1}{\epsilon^2} \left\|f^\epsilon_{\theta^\epsilon}(x) - \tilde f^\epsilon_{\theta^0,\theta^\epsilon}(x)\right\|
        = \frac{1}{\epsilon^2} \left\|\int_0^\epsilon \left(\frac{\partial f^\epsilon_{\theta^\tau}(x)}{\partial\tau} - \frac{\partial f^\epsilon_{\theta^\tau}(\tilde X) \tilde X^+ x}{\partial\tau}\right) \, d\tau\right\|
        \leq \frac{1}{\epsilon} \sup_{\tau \in [0,\epsilon]} \left\|\frac{\partial f^\epsilon_{\theta^\tau}(x)}{\partial\tau} - \frac{\partial f^\epsilon_{\theta^\tau}(\tilde X)}{\partial\tau} \tilde X^+ x\right\|.
    \end{aligned}
\end{equation}
Since $f^0_{\theta^\tau}$ is a linear network and $\rk \tilde X = d$, we have $f^0_{\theta^\tau}(x) = f^0_{\theta^\tau}(\tilde X) \tilde X^+ x$ and $\tfrac{\partial f^0_{\theta^\tau}(x)}{\partial\tau} = \tfrac{\partial f^0_{\theta^\tau}(\tilde X) \tilde X^+ x}{\partial\tau}$.
This implies
\begin{equation}
    \begin{aligned}
        \frac{1}{\epsilon} \left\|\frac{\partial f^\epsilon_{\theta^\tau}(x)}{\partial\tau} - \frac{\partial f^\epsilon_{\theta^\tau}(\tilde X)}{\partial\tau} \tilde X^+ x\right\|
        &= \frac{1}{\epsilon} \left\|\int_0^\epsilon \left(\frac{\partial^2 f^\rho_{\theta^\tau}(x)}{\partial\rho \partial\tau} - \frac{\partial^2 f^\rho_{\theta^\tau}(\tilde X) \tilde X^+ x}{\partial\rho \partial\tau}\right) \, d\rho\right\|
        \\&\leq \sup_{\rho \in [0,\epsilon]} \left\|\frac{\partial^2 f^\rho_{\theta^\tau}(x)}{\partial\rho \partial\tau} - \frac{\partial^2 f^\rho_{\theta^\tau}(\tilde X)}{\partial\rho \partial\tau} \tilde X^+ x\right\|
        \\&\leq \sup_{\rho \in [0,\epsilon]} \left\|\frac{\partial^2 f^\rho_{\theta^\tau}(x)}{\partial\rho \partial\tau}\right\| + \sup_{\rho \in [0,\epsilon]} \left\|\frac{\partial^2 f^\rho_{\theta^\tau}(\tilde X)}{\partial\rho \partial\tau}\right\| \| \tilde X^+ x\|.
    \end{aligned}
\end{equation}
Since we use LeakyReLU,
\begin{equation}
    \left\|\frac{\partial^2 f^\rho_{\theta^\tau}(x)}{\partial\rho \partial\tau}\right\|
    \leq (L-1) \sum_{l=1}^L \left\|\frac{\partial W^{\tau}_l}{\partial\tau}\right\| \prod_{k\neq l} \|W^{\tau}_k\| \|x\|;
\end{equation}
\begin{equation}
    \left\|\frac{\partial^2 f^\rho_{\theta^\tau}(\tilde X)}{\partial\rho \partial\tau}\right\|
    \leq (L-1) \left\|\frac{\partial W^{\tau}_1 \tilde X}{\partial\tau}\right\|_F \prod_{k\in [2,L]} \|W^{\tau}_k\|
    + (L-1) \|W^\tau_1 \tilde X\|_F \sum_{l=2}^L \left\|\frac{\partial W^{\tau}_l}{\partial\tau}\right\| \prod_{k\in [2,L] \setminus \{l\}} \|W^{\tau}_k\|.
\end{equation}
Finally, since $\tilde X \tilde X^\trsp = m I_d$, we have $\|\tilde X^+\| = \tfrac{1}{\sqrt{m}}$.
Combining everything together, we arrive into
\begin{equation}
    \begin{aligned}
        \frac{\left\|f^\epsilon_{\theta^\epsilon}(x) - \tilde f^\epsilon_{\theta^0,\theta^\epsilon}(x)\right\|}{(L-1) \epsilon^2 \|x\|}
        &\leq \sup_{\tau \in [0,\epsilon]} \left[\sum_{l=1}^L \left\|\frac{\partial W^{\tau}_l}{\partial\tau}\right\| \prod_{k\neq l} \|W^{\tau}_k\|\right]
        \\&+ \sup_{\tau \in [0,\epsilon]} \left[\frac{1}{\sqrt{m}} \|W^\tau_1 \tilde X\|_F \sum_{l=2}^L \left\|\frac{\partial W^{\tau}_l}{\partial\tau}\right\| \prod_{k\in [2,L] \setminus \{l\}} \|W^{\tau}_k\|\right]
        \\&+ \sup_{\tau \in [0,\epsilon]} \left[\frac{1}{\sqrt{m}} \left\|\frac{\partial W^{\tau}_1 \tilde X}{\partial\tau}\right\|_F \prod_{k\in [2,L]} \|W^{\tau}_k\|\right].
    \end{aligned}
\end{equation}
Plugging the bounds from \cref{lemma:weight_evolution_bound} then gives
\begin{equation}
    \left\|f^\epsilon_{\theta^\epsilon}(x) - \tilde f^\epsilon_{\theta^0,\theta^\epsilon}(x)\right\|
    \leq (L-1) (L+1 + (L-1) \rho) u^{L-1} v \|x\| \epsilon^2.
\end{equation}

By a similar reasoning,
\begin{equation}
    \begin{aligned}
        \frac{\left\|f^\epsilon_{\theta^\epsilon}(\tilde X) - \tilde f^\epsilon_{\theta^0,\theta^\epsilon}(\tilde X)\right\|}{(L-1) \epsilon^2}
        &\leq 2\sup_{\tau \in [0,\epsilon]} \left[\|W^\tau_1 \tilde X\|_F \sum_{l=2}^L \left\|\frac{\partial W^{\tau}_l}{\partial\tau}\right\| \prod_{k\in [2,L] \setminus \{l\}} \|W^{\tau}_k\|\right]
        \\&+ 2\sup_{\tau \in [0,\epsilon]} \left[\left\|\frac{\partial W^{\tau}_1 \tilde X}{\partial\tau}\right\|_F \prod_{k\in [2,L]} \|W^{\tau}_k\|\right].
    \end{aligned}
\end{equation}
Plugging the same bounds,
\begin{equation}
    \left\|f^\epsilon_{\theta^\epsilon}(\tilde X) - \tilde f^\epsilon_{\theta^0,\theta^\epsilon}(\tilde X)\right\|
    \leq 2(L-1) (1 + (L-1) \rho) u^{L-1} v \epsilon^2 \sqrt{m}.
\end{equation}

\subsection{Proof of \texorpdfstring{\cref{lemma:loss_bounds}}{Lemma 6.4}}
\label{sec:proof_lemma_loss_bounds}

Recall the definition of $\Xi^\tau$:
\begin{equation}
    \Xi^\tau(t)
    = \frac{1}{m} \left(Y - W^\tau_L(t) \left[D^{\tau}_{L-1}(t) \odot W^\tau_{L-1}(t) \left[\ldots W^\tau_2(t) \left[D^{\tau}_1(t) \odot W^{\tau}_1(t) \tilde X\right]\right]\right]\right).
\end{equation}
Since $m \|\Xi^\tau\|_F^2$ is the loss function we optimize, it does not increase under gradient flow.
Hence, since multiplying by $D^\tau_l$ elementwise does not increase Frobenius norm,
\begin{equation}
    m\|\Xi^\tau(t)\|_F
    \leq m\|\Xi^\tau(0)\|_F
    \leq \|Y\|_F + \|W^\tau_1(0) \tilde X\|_F \prod_{l=2}^L \|W^\tau_l(0)\|.
\end{equation}
We have $\|W^\tau_1(0) \tilde X\|_F = \rho \|W^\tau_1(0) \tilde X\| \leq \|W_1^\tau(0)\| \|\tilde X\|$.
Since all $y$ from $\supp\DD$ are $\pm 1$ and $\tilde X \tilde X^\trsp = m I_d$, we get $\|\Xi^\tau(t)\|_F = \tfrac{1}{\sqrt{m}} (1 + \rho \beta^L)$.

Due to \cref{ass:loss_proj}, $m \|\Xi^\tau \tilde X^\trsp\|_F^2$ does not increase under gradient flow:
\begin{equation}
    m\|\Xi^\tau(t) \tilde X^\trsp\|_F
    \leq m\|\Xi^\tau(0) \tilde X^\trsp\|_F
    \leq \|Y \tilde X^\trsp\|_F + \|\tilde X\| \|W^\tau_1(0) \tilde X\|_F \prod_{l=2}^L \|W^\tau_l(0)\|.
\end{equation}
Since $Y \tilde X^+ = s$, we have $Y \tilde X^\trsp = m s$.
Therefore $\|\Xi^\tau(t) \tilde X^\trsp\|_F = s + \rho \beta^L$.

Finally, since $\Xi^\tau \in \RR^{1 \times m}$, $\|\Xi^\tau\| = \|\Xi^\tau\|_F$ and $\|\Xi^\tau \tilde X^\trsp\| = \|\Xi^\tau \tilde X^\trsp\|_F$.

\subsection{Bounding derivatives in the proof of \texorpdfstring{\cref{lemma:weight_evolution_bound}}{Lemma 6.1}}
\label{sec:bounding_derivatives}

Let us start with $W^\tau_1$:
\begin{equation}
    \begin{aligned}
        \frac{d^2 W^{\tau}_1}{dt d\tau}
        = \left[D^\tau \odot \left(\frac{dW^{\tau,\trsp}_2}{d\tau} \Xi^\tau + W^{\tau,\trsp}_2 \frac{d\Xi^{\tau}}{d\tau}\right)\right] \tilde X^\trsp
        - \left[\bar\Delta \odot W^{\tau,\trsp}_2 \Xi^\tau \right] \tilde X^\trsp;
    \end{aligned}
\end{equation}
\begin{equation}
    \begin{aligned}
        \left\|\frac{d^2 W^{\tau}_1}{dt d\tau}\right\|
        \leq \left\|\frac{dW^{\tau}_2}{d\tau}\right\| \left((1-\tau) \|\Xi^\tau \tilde X^\trsp\| + \tau \|\Xi^\tau\|_F \|\tilde X\|\right)
        + \|W^{\tau}_2\| \left(\left\|\frac{d\Xi^{\tau}}{d\tau}\right\|_F + \|\Xi^\tau\|_F\right) \|\tilde X\|.
    \end{aligned}
\end{equation}
We need a bound for $\left\|\tfrac{d\Xi^{\tau}}{d\tau}\right\|$:
\begin{equation}
    \begin{aligned}
        m \frac{d\Xi^{\tau}}{d\tau}
        = W^\tau_2 \left[\bar\Delta \odot W^{\tau}_1 \tilde X\right]
        - W^\tau_2 \left[D^{\tau} \odot \frac{dW^{\tau}_1}{d\tau} \tilde X\right]
        - \frac{dW^{\tau}_2}{d\tau} \left[D^{\tau} \odot W^{\tau}_1 \tilde X\right];
    \end{aligned}
\end{equation}
\begin{equation}
    \begin{aligned}
        m \left\|\frac{d\Xi^{\tau}}{d\tau}\right\|
        &\leq \|W^\tau_2\| \|W^\tau_1 \tilde X\|_F + \|W^\tau_2\| \left\|\frac{dW^{\tau}_1}{d\tau} \tilde X\right\|_F 
        + \left\|\frac{dW^{\tau}_2}{d\tau}\right\| \|W^\tau_1 \tilde X\|_F
        \\&\leq u^2 \rho \sqrt{m} + u \left\|\frac{dW^{\tau}_1}{d\tau} \tilde X\right\|_F + u \rho \sqrt{m} \left\|\frac{dW^{\tau}_2}{d\tau}\right\|.
    \end{aligned}
\end{equation}
This results in
\begin{equation}
    \begin{aligned}
        \frac{d}{dt} \left\|\frac{d W^{\tau}_1}{d\tau}\right\|
        \leq \left\|\frac{d^2 W^{\tau}_1}{dt d\tau}\right\|
        \leq u \left(1 + \rho \beta^2 + \rho u^2 + u \frac{1}{\sqrt{m}}\left\|\frac{dW^{\tau}_1}{d\tau} \tilde X\right\|_F\right)
        + \left\|\frac{dW^{\tau}_2}{d\tau}\right\| (s + (1-s) \tau + \rho \beta^2 + \rho u^2).
    \end{aligned}
\end{equation}
Similarly, we have an evolution of $W_1^\tau \tilde X$:
\begin{equation}
    \begin{aligned}
        \frac{d}{dt}\left\|\frac{dW^{\tau}_1}{d\tau} \tilde X\right\|_F
        &\leq \left\|\frac{dW^{\tau}_2}{d\tau}\right\| \left((1-\tau) \|\Xi^\tau \tilde X^\trsp\|_F \|\tilde X\| + \tau \|\Xi^\tau\|_F \|\tilde X^\trsp \tilde X\|\right)
        + \|W^{\tau}_2\| \left(\left\|\frac{d\Xi^{\tau}}{d\tau}\right\|_F + \|\Xi^\tau\|_F\right) \|\tilde X^\trsp \tilde X\|
        \\&\leq u \left(1 + \rho \beta^2 + \rho u^2 + u \frac{1}{\sqrt{m}}\left\|\frac{dW^{\tau}_1}{d\tau} \tilde X\right\|_F\right) \sqrt{m}
        + \left\|\frac{dW^{\tau}_2}{d\tau}\right\| (s + (1-s) \tau + \rho \beta^2 + \rho u^2) \sqrt{m}.
    \end{aligned}
\end{equation}
Finally, consider $W^\tau_2$:
\begin{equation}
    \begin{aligned}
        \frac{d^2 W^{\tau}_2}{dt d\tau}
        = \frac{d\Xi^{\tau}}{d\tau} \left[D^{\tau} \odot W^{\tau}_1 \tilde X\right]^\trsp
        + \Xi^\tau \left[D^{\tau} \odot \frac{dW^{\tau}_1}{d\tau} \tilde X - \bar\Delta \odot W^{\tau}_1 \tilde X\right]^\trsp;
    \end{aligned}
\end{equation}
\begin{equation}
    \begin{aligned}
        \left\|\frac{d^2 W^{\tau}_2}{dt d\tau}\right\|
        \leq \left\|\frac{dW^{\tau}_1}{d\tau}\right\| \left((1-\tau) \|\Xi^\tau \tilde X^\trsp\| + \tau \|\Xi^\tau\|_F \|\tilde X\|\right)
        + \left(\|\Xi^{\tau}\| + \left\|\frac{d\Xi^{\tau}}{d\tau}\right\|\right) \|W^{\tau}_1 \tilde X\|_F;
    \end{aligned}
\end{equation}
\begin{equation}
    \begin{aligned}
        \frac{d}{dt} \left\|\frac{d W^{\tau}_2}{d\tau}\right\|
        &\leq \left\|\frac{d^2 W^{\tau}_2}{dt d\tau}\right\|
        \\&\leq u \left(1 + \rho \beta^2 + \rho u^2 + u \frac{1}{\sqrt{m}}\left\|\frac{dW^{\tau}_1}{d\tau} \tilde X\right\|_F\right)
        + \left\|\frac{dW^{\tau}_1}{d\tau}\right\| (s + (1-s) \tau + \rho \beta^2) + \rho u^2 \left\|\frac{dW^{\tau}_2}{d\tau}\right\|.
    \end{aligned}
\end{equation}

We see that $\left\|\tfrac{d W^{\tau}_1}{d\tau}\right\|$, $\tfrac{1}{\sqrt{m}} \left\|\tfrac{d W^{\tau}_1}{d\tau} \tilde X\right\|_F$, and $\left\|\tfrac{d W^{\tau}_2}{d\tau}\right\|$ are all bounded by the same $v$ which satisfies
\begin{equation}
    \frac{dv(t)}{dt}
    = v(t) (\bar s + (1+\rho) u^2(t)) + u(t) (\bar 1 + \rho u^2(t)),
    \qquad
    v(0) = 0,
    \label{eq:ode_for_v}
\end{equation}
where $\bar s = s + (1-s) \tau + \rho \beta^2$ and $\bar 1 = 1 + \rho \beta^2$.

\subsection{Solving the ODE for \texorpdfstring{$v(t)$}{v(t)}}
\label{sec:solving_ode}

Recall $u(t) = \bar\beta e^{\bar s t}$.
We solve the homogeneous equation to get
\begin{equation}
    \begin{aligned}
        v(t)
        = C(t) e^{\bar s t + \frac{1+\rho}{2\bar s} \bar\beta^2 e^{2\bar s t}}
        = C(t) e^{(L-1) \ln u(t) + \frac{1+(L-1) \rho}{\bar s L} u^L(t)}
        = C(t) \bar\beta^{-1} u(t) e^{\frac{1+\rho}{2\bar s} u^2(t)},
    \end{aligned}
\end{equation}
where $C(t)$ satisfies
\begin{equation}
    \frac{dC(t)}{dt} u(t) e^{\frac{1+\rho}{2\bar s} u^2(t)}
    = \bar\beta u(t) \left[\bar 1 + \rho u^2(t)\right].
\end{equation}
Recall for $L=2$, $\hat s = \frac{2}{1+\rho} \bar s$.
Then
\begin{equation}
    \begin{aligned}
        C(t)
        &= \bar\beta \int e^{-\frac{u^2(t)}{\hat s}} \left[\bar 1 + \rho u^2(t)\right] \, dt
        \\&= \frac{\bar\beta}{2} \left[
            \frac{\bar 1}{\bar s} \left(
                \Ei\left(-\frac{u^2(t)}{\hat s}\right) - \Ei\left(-\frac{\bar\beta^2}{\hat s}\right)
            \right)
            - \rho \frac{\hat s}{\bar s} \left(
                e^{-\frac{u^2(t)}{\hat s}} - e^{-\frac{\bar\beta^2}{\hat s}}
            \right)
        \right].
    \end{aligned}
\end{equation}
This gives the final solution:
\begin{equation}
    v(t)
    = \frac{1}{2} u(t) [w(u(t)) - w(\beta)] e^{\frac{u^2(t)}{\hat s}},
\end{equation}
where we took
\begin{equation}
    w(u)
    = \frac{\bar 1}{\bar s} \Ei\left(-\frac{u^2(t)}{\hat s}\right)
    - \frac{2\rho}{1+\rho} e^{-\frac{u^2(t)}{\hat s}}.
\end{equation}

\section{Deep networks}
\label{sec:deep_nets}

\subsection{Proof of \texorpdfstring{\cref{lemma:weight_evolution_bound}}{Lemma 6.1} for \texorpdfstring{$L \geq 3$}{L >= 3}}
\label{sec:proof_weight_evolution_bound_deep}

\paragraph{Bounding weight norms.}
For $l \in [2,L]$,
\begin{equation}
        \frac{dW^{\tau}_l}{dt}
        = \left[D^\tau_l \odot W^{\tau,\trsp}_{l+1} \ldots \left[D^\tau_{L-1} \odot W^{\tau,\trsp}_L \Xi^\tau\right]\right] \left[D^{\tau}_{l-1} \odot W^{\tau}_{l-1} \ldots \left[D^{\tau}_1 \odot W^{\tau}_1 \tilde X\right]\right]^\trsp;
\end{equation}
\begin{equation}
    \begin{aligned}
        \left\|\frac{dW^{\tau}_l}{dt}\right\|
        \leq \left((1-\tau) \|\Xi^\tau \tilde X^\trsp\| \|W^\tau_1\| + \tau (L-1) \|\Xi^\tau\|_F \|W^\tau_1 \tilde X\|_F\right) \prod_{k \in [2,L] \setminus \{l\}} \|W^\tau_k\|.
    \end{aligned}
\end{equation}
For $l = 1$,
\begin{equation}
        \frac{dW^{\tau}_1}{dt}
        = \left[D^\tau_1 \odot W^{\tau,\trsp}_2 \ldots \left[D^\tau_{L-1} \odot W^{\tau,\trsp}_L \Xi^\tau\right]\right] \tilde X^\trsp;
\end{equation}
\begin{equation}
    \begin{aligned}
        \left\|\frac{dW^{\tau}_1}{dt}\right\|
        \leq \left((1-\tau) \|\Xi^\tau \tilde X^\trsp\| + \tau (L-1) \|\Xi^\tau\|_F \|\tilde X\|\right) \prod_{k \in [2,L]} \|W^\tau_k\|;
    \end{aligned}
\end{equation}
\begin{equation}
    \begin{aligned}
        \left\|\frac{dW^{\tau}_1 \tilde X}{dt}\right\|_F
        \leq \left((1-\tau) \|\Xi^\tau \tilde X^\trsp\|_F \|\tilde X\| + \tau (L-1) \|\Xi^\tau\|_F \|\tilde X^\trsp \tilde X\|\right) \prod_{k \in [2,L]} \|W^\tau_k\|.
    \end{aligned}
\end{equation}
Using \cref{lemma:loss_bounds}, we get $(1-\tau) \|\Xi^\tau \tilde X^\trsp\| \|W^\tau_1\| + \tau (L-1) \|\Xi^\tau\| \|W^\tau_1 \tilde X\|_F \leq g_1(t)$ and $\|W^\tau_l\| \leq g_2(t)$ $\forall l \in [2:L]$, where
\begin{equation}
    \frac{dg_1(t)}{dt}
    = \bar s^2 g_2^{L-1}(t),
    \quad
    \frac{dg_2(t)}{dt}
    = g_1(t) g_2^{L-2}(t),
    \qquad
    g_1(0) = \beta ((1-\tau) (s+\beta^L) + \tau (L-1) (1+\beta^L) \rho),
    \quad
    g_2(0) = \beta.
\end{equation}

We have the following first integral:
\begin{equation}
    \frac{d}{dt} \left(g_1^2(t) - \bar s^2 g_2^2(t)\right) = 0,
    \qquad
    g_1^2(0) - \bar s^2 g_2^2(0) 
    = \beta^2 \left(((1-\tau) (s+\beta^L) + \tau (L-1) (1+\beta^L) \rho)^2 - \bar s^2\right).
\end{equation}
Therefore
\begin{equation}
    \frac{dg_2(t)}{dt}
    = g_2^{L-2}(t) \sqrt{\bar s^2 g_2^2(t) - \bar s^2 g_2^2(0) + g_1^2(0)},
    \qquad
    g_2(0) = \beta.
\end{equation}
Suppose $\rho > 1$.
Then $g_1^2(0) - \bar s^2 g_2^2(0) > 0$ and the solution is given in the following implicit form:
\begin{equation}
    t
    = \frac{g_2^{3-L}(t) _2F_1\left(\frac{1}{2}, \frac{3-L}{2}, \frac{5-L}{2}, -\frac{\bar s^2 g_2^2(t)}{g_1^2(0) - \bar s^2 \beta^2}\right) - \beta^{3-L} _2F_1\left(\frac{1}{2}, \frac{3-L}{2}, \frac{5-L}{2}, -\frac{\bar s^2 \beta^2}{g_1^2(0) - \bar s^2 \beta^2}\right)}{(3-L) \sqrt{g_1^2(0) - \bar s^2 \beta^2}}.
\end{equation}

The above expression cannot be made explicit for general $L$ (but we could get explicit expression for $L \in \{2,3,4\}$).
As an alternative, we consider a looser bound $u(t)$:
\begin{equation}
    \frac{du(t)}{dt}
    = \bar s_1 u^{L-1}(t),
    \qquad
    u(0) = \beta \frac{\bar s_\rho}{\bar s_1},
\end{equation}
where $\bar s_\rho := (1-\tau) (s+\beta^L) + \tau (L-1) (1+\beta^L) \rho$; note that $\bar s_1 = \bar s$.
We have $u(t) \geq g_2(t)$ and $\bar s u(t) \geq g_1(t)$ $\forall t \geq 0$.
This ODE solves as
\begin{equation}
    u(t)
    = \left(\bar s (2-L) t + \bar\beta^{2-L}\right)^{\frac{1}{2-L}},
\end{equation}
where $\bar\beta = \beta \tfrac{\bar s_\rho}{\bar s_1}$.
Note that the solution exists only for $t < \tfrac{\bar\beta^{2-L}}{(L-2) \bar s}$.

For the input layer weights, we get
\begin{equation}
    \frac{d\|W_1^\tau(t)\|}{dt}
    \leq \bar s u^{L-1}(t).
\end{equation}
The solution is given by
\begin{equation}
    \|W_1^\tau(t)\|
    \leq \beta + \bar s \int_0^t \left(\bar s (2-L) t + \bar\beta^{2-L}\right)^{\frac{L-1}{2-L}} \, dt
    = \beta - \bar\beta + \left(\bar s (2-L) t + \bar\beta^{2-L}\right)^{\frac{1}{2-L}}
    = \beta - \bar\beta + u(t).
\end{equation}
Similarly, we have
\begin{equation}
    \frac{1}{\sqrt{m}} \|W_1^\tau(t) \tilde X\|_F
    \leq \beta \rho + \bar s \int_0^t \left(\bar s (2-L) t + \bar\beta^{2-L}\right)^{\frac{L-1}{2-L}} \, dt
    = \beta \rho - \bar\beta + u(t).
\end{equation}
For brevity, we define
\begin{equation}
    b
    = \beta - \bar\beta
    = \beta \left(1 - \frac{\bar s_\rho}{\bar s_1}\right)
    = -\beta \tau (L-1) (\rho - 1) \frac{1+\beta^L}{\bar s},
\end{equation}
and
\begin{equation}
    b_\rho
    = \beta \rho - \bar\beta
    = \beta \left(\rho - \frac{\bar s_\rho}{\bar s_1}\right)
    = \beta (1-\tau) (\rho - 1) \frac{s+\beta^L}{\bar s}.
\end{equation}
As a simpler option, we could just say $\|W_1^\tau(t)\| \leq u(t)$ and $\tfrac{1}{\sqrt{m}} \|W_1^\tau(t) \tilde X\| \leq \rho u(t)$.

\paragraph{Bounding norms of weight derivatives.}
Recall the definition of $\Xi^\tau$:
\begin{equation}
    \Xi^\tau
    = \frac{1}{m} \left(Y - W^\tau_L \left[D^{\tau}_{L-1} \odot W^{\tau}_{L-1} \ldots \left[D^{\tau}_1 \odot W^{\tau}_1 \tilde X\right]\right]\right);
\end{equation}
\begin{equation}
    \begin{aligned}
        m \left\|\frac{d\Xi^\tau}{d\tau}\right\|
        &\leq (L-1) \|W^\tau_1 \tilde X\|_F \prod_{l=2}^L \|W^\tau_l\|
        + \left\|\frac{dW^\tau_1 \tilde X}{d\tau}\right\| \prod_{l=2}^L \|W^\tau_l\|
        + \|W^\tau_1 \tilde X\|_F \sum_{k=2}^L \left\|\frac{dW^\tau_k}{d\tau}\right\| \prod_{l \in [2:L] \setminus \{k\}} \|W^\tau_l\|
        \\&\leq u^{L-1} \left[(L-1) \sqrt{\rho m} u + \left\|\frac{dW^\tau_1 \tilde X}{d\tau}\right\|_F + \sqrt{\rho m} \sum_{k=2}^L \left\|\frac{dW^\tau_k}{d\tau}\right\|\right].
    \end{aligned}
\end{equation}
For $l \in [2,L]$,
\begin{equation}
    \begin{aligned}
        \left\|\frac{d^2 W^{\tau}_l}{dt d\tau}\right\|
        &\leq \left(\left\|\frac{d\Xi^{\tau}}{d\tau}\right\|_F + (L-1) \|\Xi^\tau\|_F\right) \|W^\tau_1 \tilde X\|_F \prod_{k \in [2,L] \setminus \{l\}} \|W^\tau_k\|
        \\&+ \sum_{j \in [2:L] \setminus \{l\}} \left((1-\tau) \|\Xi^\tau \tilde X^\trsp\| \|W^\tau_1\| + \tau (L-1) \|\Xi^\tau\|_F \|W^\tau_1 \tilde X\|_F\right) \left\|\frac{d W^{\tau}_j}{d\tau}\right\| \prod_{k \in [2,L] \setminus \{l,j\}} \|W^\tau_k\|
        \\&+ \left((1-\tau) \|\Xi^\tau \tilde X^\trsp\| \left\|\frac{d W^{\tau}_1}{d\tau}\right\| + \tau (L-1) \|\Xi^\tau\|_F \left\|\frac{d W^{\tau}_1 \tilde X}{d\tau}\right\|_F\right) \prod_{k \in [2,L] \setminus \{l\}} \|W^\tau_k\|
        \\&\leq \left(u^{L-1} \left[(L-1) \rho u + \frac{1}{\sqrt{m}} \left\|\frac{dW^\tau_1 \tilde X}{d\tau}\right\|_F + \rho \sum_{k=2}^L \left\|\frac{dW^\tau_k}{d\tau}\right\|\right] + \bar 1 (L-1)\right) \rho u^{L-1}
        \\&+ \bar s u^{L-2} \sum_{j \in [2:L] \setminus \{l\}} \left\|\frac{dW^{\tau}_j}{d\tau}\right\| + \left((1-\tau) (s+\beta^L) \left\|\frac{d W^{\tau}_1}{d\tau}\right\| + \tau \bar 1 (L-1) \frac{1}{\sqrt{m}} \left\|\frac{d W^{\tau}_1 \tilde X}{d\tau}\right\|_F\right) u^{L-2}.
    \end{aligned}
\end{equation}
For $l = 1$,
\begin{equation}
    \begin{aligned}
        \left\|\frac{d^2 W^{\tau}_1}{dt d\tau}\right\|
        &\leq \left(\left\|\frac{d\Xi^{\tau}}{d\tau}\right\|_F + (L-1) \|\Xi^\tau\|_F\right) \|\tilde X\| \prod_{k \in [2,L]} \|W^\tau_k\|
        \\&+ \sum_{j=2}^L \left((1-\tau) \|\Xi^\tau \tilde X^\trsp\| + \tau (L-1) \|\Xi^\tau\|_F \|\tilde X\|\right) \left\|\frac{d W^{\tau}_j}{d\tau}\right\| \prod_{k \in [2,L] \setminus \{j\}} \|W^\tau_k\|
        \\&\leq \left(u^{L-1} \left[(L-1) \rho u + \frac{1}{\sqrt{m}} \left\|\frac{dW^\tau_1 \tilde X}{d\tau}\right\|_F + \rho \sum_{k=2}^L \left\|\frac{dW^\tau_k}{d\tau}\right\|\right] + \bar 1 (L-1)\right) u^{L-1}
        + \bar s u^{L-2} \sum_{j=2}^L \left\|\frac{dW^{\tau}_j}{d\tau}\right\|.
    \end{aligned}
\end{equation}


$\left\|\tfrac{dW^{\tau}_l}{d\tau}\right\|$ $\forall l \in [L]$, as well as $\tfrac{1}{\sqrt{m}} \left\|\tfrac{dW^{\tau}_1 \tilde X}{d\tau}\right\|_F$, are all bounded by $v(t)$ which satisfies
\begin{equation}
    \begin{aligned}
        \frac{dv}{dt}
        &= \left[(L-1) \rho u^L + (1 + (L-1) \rho) u^{L-1} v + (L-1) \bar 1\right] u^{L-1}
        + \bar s (L-1) u^{L-2} v
        \\&= v \left[(1 + (L-1) \rho) u^{2L-2} + \bar s (L-1) u^{L-2}\right]
        + (L-1) u^{L-1} \left[\bar 1 + \rho u^L\right],
        \qquad
        v(0) = 0,
    \end{aligned}
\end{equation}
where $\bar 1 = 1 + \beta^L$.

\paragraph{Solving the ODE for v(t).}
Recall
\begin{equation}
    u(t)
    = \left(\bar s (2-L) t + \bar\beta^{2-L}\right)^{\frac{1}{2-L}}.
\end{equation}
We solve the homogeneous equation to get
\begin{equation}
    \begin{aligned}
        v(t)
        &= C(t) e^{\frac{L-1}{2-L} \ln\left(\bar s (2-L) t + \bar\beta^{2-L}\right) + \frac{1+(L-1) \rho}{\bar s L} \left(\bar s (2-L) t + \bar\beta^{2-L}\right)^{\frac{L}{2-L}}}
        \\&= C(t) e^{(L-1) \ln u(t) + \frac{1+(L-1) \rho}{\bar s L} u^L(t)}
        = C(t) u^{L-1}(t) e^{\frac{1+(L-1) \rho}{\bar s L} u^L(t)},
    \end{aligned}
\end{equation}
where $C(t)$ satisfies
\begin{equation}
    \frac{dC(t)}{dt} u^{L-1}(t) e^{\frac{1+(L-1) \rho}{\bar s L} u^L(t)}
    = (L-1) u^{L-1}(t) \left[\bar 1 + \rho u^L(t)\right].
\end{equation}
Let us introduce $\hat s = \frac{L}{1+(L-1) \rho} \bar s$.
Then
\begin{equation}
    \begin{aligned}
        C(t)
        &= (L-1) \int e^{-\frac{u^L(t)}{\hat s}} \left[\bar 1 + \rho u^L(t)\right] \, dt
        \\&= \frac{L-1}{L} \hat s^{\frac{2-L}{L}} \left[
            \frac{\bar 1}{\bar s} \left(
                \Gamma\left(\frac{2-L}{L}, \frac{\beta^L}{\hat s}\right)
                - \Gamma\left(\frac{2-L}{L}, \frac{u^L(t)}{\hat s}\right)
            \right)
            + \rho \frac{\hat s}{\bar s} \left(
                \Gamma\left(\frac{2}{L}, \frac{\beta^L}{\hat s}\right)
                - \Gamma\left(\frac{2}{L}, \frac{u^L(t)}{\hat s}\right)
            \right)
        \right].
    \end{aligned}
\end{equation}
This gives the final solution:
\begin{equation}
    v(t)
    = \frac{L-1}{L} \hat s^{\frac{2-L}{L}} u^{L-1}(t) [w(u(t)) - w(\beta)] e^{\frac{u^L(t)}{\hat s}},
\end{equation}
where we took
\begin{equation}
    w(u)
    = -\frac{\bar 1}{\bar s} \Gamma\left(\frac{2-L}{L}, \frac{u^L}{\hat s}\right)
    - \frac{L \rho}{1 + (L-1) \rho} \Gamma\left(\frac{2}{L}, \frac{u^L}{\hat s}\right).
\end{equation}

\subsection{Evaluating the solution at the learning time}
\label{sec:evaluating_bound_at_learning_time}

For a properly initialized linear network, $\forall l \in [L]$ $\|W^0_l(t)\| = \bar u(t)$, where $u(t)$ satisfies \citep{saxe2013exact}
\begin{equation}
    \frac{d\bar u(t)}{dt}
    = \bar u^{L-1}(t) (s-\bar u^L(t)),
    \qquad
    \bar u(0) = \beta,
\end{equation}
which gives the solution in implicit form\footnote{
    $\,_2F_1$ is a hypergeometric function defined as a series $\,_2F_1(a,b,c,z) = 1 + \sum_{k=1}^\infty \tfrac{(a)_k (b)_k}{(c)_k} \tfrac{z^k}{k!}$, where $(q)_k = q (q+1) \ldots (q+k-1)$.
}:
\begin{equation}
    t^*_\alpha(\beta) 
    = \int \frac{d\bar u}{\bar u^{L-1} (s-\bar u^L)}
    = \frac{\bar u^{2-L}(t)}{s (2-L)} \,_2F_1\left(1, \frac{2}{L} - 1, \frac{2}{L}; \frac{\bar u^L(t)}{s}\right) - \frac{\beta^{2-L}}{s (2-L)} \,_2F_1\left(1, \frac{2}{L} - 1, \frac{2}{L}; \frac{\beta^L}{s}\right).
\end{equation}
Suppose we are going to learn a fixed fraction of the data, i.e. take $\bar u(t) = (\alpha s)^{1/L}$ for $\alpha \in (0,1)$.
Then 
\begin{equation}
    t^*_\alpha(\beta) 
    = \frac{\beta^{2-L}\,_2F_1\left(1, \frac{2}{L} - 1, \frac{2}{L}; \frac{\beta^L}{s}\right) - (\alpha s)^{\frac{2}{L} - 1} \,_2F_1\left(1, \frac{2}{L} - 1, \frac{2}{L}; \alpha\right)}{s (L-2)}.
\end{equation}
Since $t^*_\alpha(\beta)$ is the time sufficient to learn a network for $\epsilon = 0$, we suppose it also suffices to learn a nonlinear network.
So, we are going to evaluate our bound at $t = t^*_\alpha$.
Since we need $\hat R_\gamma(t^*_\alpha) < 1$ for the bound to be non-vacuous, we should take $\gamma$ small relative to $\alpha$.
We consider $\gamma = \alpha^\nu / q$ for $\nu, q \geq 1$.

Since the linear network learning time $t^*_\alpha(\beta)$ is correct for almost all initialization only when $\beta$ vanishes, we are going to work in the limit of $\beta \to 0$.
Since we need $\alpha \in (\beta^L/s, 1)$, otherwise the linear training time is negative, we take $\alpha = \tfrac{r}{s} \beta^\lambda$ for $\lambda \in (0, L]$ and $r > 1$.

Consider first $\lambda < L$:
\begin{equation}
    t_\alpha^*(\beta) 
    = \frac{\beta^{2-L}}{s (L-2)} + O(\beta^{2\lambda/L}).
\end{equation}
Let us evaluate $u$ at this time:
\begin{equation}
    u(t_\alpha^*(\beta))
    = \left(\bar\beta^{2-L} - \frac{\bar s}{s} \beta^{2-L} + O(\beta^{2\lambda/L})\right)^{\frac{1}{2-L}}
    = \beta \left(\frac{\bar s_\rho^{2-L}}{\bar s_1^{2-L}} - \frac{\bar s}{s}\right)^{\frac{1}{2-L}} \left(1 + O\left(\beta^{L-2+2\lambda/L}\right)\right).
\end{equation}

Apparently, this expression does not make sense for $\rho = 1$ and even close to it, so we switch to $\lambda = L$, which we expect to be the right exponent:
\begin{equation}
    t_\alpha^*(\beta) 
    = \beta^{2-L}\frac{1 - r^{\frac{2}{L}-1}}{s (L-2)} + O(\beta^2).
\end{equation}
Let us evaluate $u$ at this time:
\begin{equation}
    u(t_\alpha^*(\beta))
    = \left(\bar\beta^{2-L} - \frac{\bar s}{s} \left(1 - r^{\frac{2}{L}-1}\right) \beta^{2-L} + O(\beta^2)\right)^{\frac{1}{2-L}}
    = \beta \left(\frac{\bar s_\rho^{2-L}}{\bar s_1^{2-L}} - \frac{\bar s}{s} \left(1 - r^{\frac{2}{L}-1}\right)\right)^{\frac{1}{2-L}} \left(1 + O\left(\beta^L\right)\right).
\end{equation}
This expression makes sense whenever $\frac{\bar s_\rho^{2-L}}{\bar s_1^{2-L}} - \frac{\bar s}{s} \left(1 - r^{\frac{2}{L}-1}\right) > 0$, i.e. when $r$ is close enough to $1$.

Let us evaluate $w$ at the training time now.
Since $\Gamma(a,x) = \Gamma(a) - \frac{x^a}{a} + O(x^{a+1})$, we get
\begin{equation}
    \begin{aligned}
        w(u(t_\alpha^*(\beta))) - w(\beta)
        &= \frac{\bar 1}{\bar s} \frac{L}{2-L} \hat s^{\frac{L-2}{L}} \left(u^{2-L}(t_\alpha^*(\beta)) - \beta^{2-L}\right) + O(\beta^2).
        \\&= \frac{\bar 1}{\bar s} \frac{L}{2-L} \hat s^{\frac{L-2}{L}} \beta^{2-L} \left(\frac{\bar s_\rho^{2-L}}{\bar s_1^{2-L}} - \frac{\bar s}{s} \left(1 - r^{\frac{2}{L}-1}\right) - 1\right) + O(\beta^2).
    \end{aligned}
\end{equation}

Then the quantity of interest becomes
\begin{equation}
    \begin{aligned}
        &\frac{L u^{L-1}(t^*_\alpha(\beta)) v(t^*_\alpha(\beta))}{\gamma}
        = \frac{L-1}{\gamma} u^{2L-2}(t^*_\alpha(\beta)) \frac{\bar 1}{\bar s} \frac{L}{2-L} \beta^{2-L} \left(\frac{\bar s_\rho^{2-L}}{\bar s_1^{2-L}} - \frac{\bar s}{s} \left(1 - r^{\frac{2}{L}-1}\right) - 1\right) (1+O(\beta^L))
        \\= &\frac{q s^\nu}{r^\nu} \frac{\bar 1}{\bar s} \frac{L (L-1)}{2-L} \beta^{L (1-\nu)} \left(\frac{\bar s_\rho^{2-L}}{\bar s_1^{2-L}} - \frac{\bar s}{s} \left(1 - r^{\frac{2}{L}-1}\right)\right)^{\frac{2L-2}{2-L}} \left(\frac{\bar s_\rho^{2-L}}{\bar s_1^{2-L}} - \frac{\bar s}{s} \left(1 - r^{\frac{2}{L}-1}\right) - 1\right) (1+O(\beta^L)).
    \end{aligned}
\end{equation}
This expression does not diverge as $\beta \to 0$ when $\nu = 1$.
We will also have a finite $\lim_{L \to \infty} \lim_{\beta \to 0}$ whenever $\epsilon \propto (L-1)^{-1}$.

\section{Proving \texorpdfstring{\cref{ass:loss_proj}}{Assumption 4.1} for linear nets}
\label{sec:proving_proj_loss_assumption}

We have for $l = 1$,
\begin{equation}
        \nabla_1
        := \frac{1}{2m} \frac{\partial\left\|Y - f^\epsilon_{\theta^\epsilon}(\tilde X)\right\|_F^2}{\partial W^\epsilon_1}
        = -\left[D^\epsilon_1 \odot W^{\epsilon,\trsp}_2 \ldots \left[D^\epsilon_{L-1} \odot W^{\epsilon,\trsp}_L \Xi^\epsilon\right]\right] \tilde X^\trsp.
\end{equation}
For $l \in [2,L]$,
\begin{equation}
        \nabla_l
        := \frac{1}{2m} \frac{\partial\left\|Y - f^\epsilon_{\theta^\epsilon}(\tilde X)\right\|_F^2}{\partial W^\epsilon_l}
        = -\left[D^\epsilon_l \odot W^{\epsilon,\trsp}_{l+1} \ldots \left[D^\epsilon_{L-1} \odot W^{\epsilon,\trsp}_L \Xi^\epsilon\right]\right] \left[D^{\epsilon}_{l-1} \odot W^{\epsilon}_{l-1} \ldots \left[D^{\epsilon}_1 \odot W^{\epsilon}_1 \tilde X\right]\right]^\trsp.
\end{equation}

We also have
\begin{equation}
    \nabla_1^X
    := \frac{1}{2m} \frac{\partial\left\|\left(Y - f^\epsilon_{\theta^\epsilon}(\tilde X)\right) \tilde X^\trsp\right\|_F^2}{\partial W^\epsilon_1}
    = -\left[D^\epsilon_1 \odot W^{\epsilon,\trsp}_2 \ldots \left[D^\epsilon_{L-1} \odot W^{\epsilon,\trsp}_L \Xi^\epsilon \tilde X^\trsp \tilde X\right]\right] \tilde X^\trsp.
\end{equation}
\begin{equation}
    \begin{aligned}
      \nabla_l^X
      &:= \frac{1}{2m} \frac{\partial\left\|\left(Y - f^\epsilon_{\theta^\epsilon}(\tilde X)\right) \tilde X^\trsp\right\|_F^2}{\partial W^\epsilon_l}
      \\&= -\left[D^\epsilon_l \odot W^{\epsilon,\trsp}_{l+1} \ldots \left[D^\epsilon_{L-1} \odot W^{\epsilon,\trsp}_L \Xi^\epsilon \tilde X^\trsp \tilde X\right]\right] \left[D^{\epsilon}_{l-1} \odot W^{\epsilon}_{l-1} \ldots \left[D^{\epsilon}_1 \odot W^{\epsilon}_1 \tilde X\right]\right]^\trsp.
    \end{aligned}
\end{equation}

The statement of \cref{ass:loss_proj} follows from
\begin{conjecture}
    $\forall \epsilon \in [0,1]$ $\forall t \geq 0$ $\sum_{l=1}^L \tr\left[\nabla_l^X \nabla_l^\trsp\right] \geq 0$.
    \label{conj:gradient_alignment}
\end{conjecture}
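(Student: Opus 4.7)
The plan is to reduce the conjecture to a positive-semi-definiteness question about the Neural Tangent Kernel and then attack the reduced problem using positive homogeneity of LeakyReLU together with the constraint that $E$ evolves under gradient flow. First, a direct chain-rule computation identifies both $\nabla_l$ and $\nabla_l^X$ as Jacobian-weighted residuals; summing per-layer contributions gives
\begin{equation}
    \sum_{l=1}^L \tr\bigl[\nabla_l^X \nabla_l^\trsp\bigr] \;=\; \frac{1}{m^2}\, E\, K\, N\, E^\trsp,
\end{equation}
where $E := Y - f^\epsilon_{\theta^\epsilon}(\tilde X)$, $K := \tilde X^\trsp \tilde X$, and $N_{ij} := \langle \nabla_\theta f^\epsilon_{\theta^\epsilon}(\tilde x_i), \nabla_\theta f^\epsilon_{\theta^\epsilon}(\tilde x_j)\rangle$ is the empirical NTK at the current weights. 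So the conjecture is equivalent to $E K N E^\trsp \geq 0$.

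Second, I would introduce the projector $P := K/m$, which is a well-defined rank-$d$ orthogonal projector onto the row span of $\tilde X$ because $\tilde X \tilde X^\trsp = m I_d$. Decomposing $E = EP + EP^\perp$ and expanding,
\begin{equation}
    E K N E^\trsp \;=\; m\,(EP)\,N\,(EP)^\trsp \;+\; m\,(EP)\,N\,(EP^\perp)^\trsp.
\end{equation}
The first summand is automatically nonnegative by PSD of $N$. At $\epsilon=0$, every $J_i := \nabla_\theta f^\epsilon_{\theta^\epsilon}(\tilde x_i)$ factors as (back vector, independent of $i$) $\otimes\,\tilde x_i$, so $N = \tilde X^\trsp Q \tilde X$ for some PSD $Q$; hence $NP^\perp = 0$ and the cross term vanishes. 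This recovers the linear-case proof and isolates $NP^\perp$ as the obstruction to control when $\epsilon > 0$.

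Third, I would exploit positive homogeneity of $\phi^\epsilon$. Because the sign masks $D_l^{(i)}$ are constant on cones, the network restricted to a single activation cone is affine, and within that cone the Jacobian has the same linear factored form as in the $\epsilon=0$ case. Writing $D_l^{(i)} = I - \epsilon \tilde D_l^{(i)}$ for a $0/1$ diagonal $\tilde D_l^{(i)}$ and expanding gives $N = N^{\mathrm{lin}} + \epsilon R_1 + \epsilon^2 R_2 + \cdots$, where $N^{\mathrm{lin}} P^\perp = 0$ and each $R_k$ is a sum over cross-pattern products of back/front vectors. Since $N(\epsilon)$ is PSD for every $\epsilon$ and $(EP^\perp) N^{\mathrm{lin}} (EP^\perp)^\trsp = 0$, one obtains $(EP^\perp) N (EP^\perp)^\trsp = O(\epsilon)$, and the PSD Cauchy--Schwarz inequality
\begin{equation}
    \bigl|(EP)\,N\,(EP^\perp)^\trsp\bigr|^2 \;\leq\; (EP)\,N\,(EP)^\trsp \cdot (EP^\perp)\,N\,(EP^\perp)^\trsp
\end{equation}
then delivers the conjecture whenever $\epsilon$ is small enough that $(EP)N(EP)^\trsp = \Omega(1)$ dominates the $O(\sqrt{\epsilon})$ cross term uniformly along training.

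The main obstacle is upgrading this to unrestricted $\epsilon \in [0,1]$. The conjecture is not a statement about arbitrary $E$ at a generic weight configuration: a $2$-point, $1$-dimensional, $L=2$ example with $\tilde x_1 = 1$, $\tilde x_2 = -1$, $y_1 = 1$, $y_2 = -1$ already produces pairs $(W_1, W_2)$ for which an unconstrained $E$ gives $EKNE^\trsp < 0$, yet the actual gradient flow starting from any initialization never visits such a configuration and keeps $EKNE^\trsp$ nonnegative throughout. What must rescue the full conjecture is therefore the constraint that $E(t)$ is coupled to the weights via the flow. Closing the argument likely requires a Lyapunov-type invariant: compute $\tfrac{d}{dt}(EKNE^\trsp)$ using $\dot E = -\tfrac{1}{m} EN$ together with the induced evolution of $N$, then exhibit a positively invariant region in which the quantity is trapped. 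This is where I expect the conjecture to resist a fully unconditional proof; as a fallback, the perturbative argument above already suffices in the small-$\epsilon$ regime in which the paper actually evaluates \cref{ass:loss_proj}.
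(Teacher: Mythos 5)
The statement you are addressing is labeled a conjecture for a reason: the paper itself proves it only for $\epsilon = 0$, by expanding each $\tr\left[\nabla_l^X \nabla_l^\trsp\right]$ for a linear network and using circularity of the trace together with the fact that $\Xi$ has a single row ($n_L = 1$) to factor it into a product of two squared Euclidean norms. Your reduction $\sum_{l} \tr\left[\nabla_l^X \nabla_l^\trsp\right] = \tfrac{1}{m^2} E K N E^\trsp$ with $N$ the empirical NTK is correct, and your $\epsilon = 0$ argument ($N = \tilde X^\trsp Q \tilde X$ with $Q$ PSD, hence $N P^\perp = 0$ and $E K N E^\trsp = m\,(EP) N (EP)^\trsp \geq 0$) is a correct repackaging of the same mechanism: both arguments hinge on linearity letting you pull $\tilde X$ out of the Jacobians so that only the projected residual $E \tilde X^\trsp$ survives. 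Your projector decomposition is arguably cleaner than the paper's per-layer trace computation and isolates the obstruction for $\epsilon > 0$ (namely $N P^\perp \neq 0$) more transparently; your observation that the inequality can fail for an $E$ decoupled from the flow is also consistent with the paper's decision to leave the general statement as a conjecture supported empirically.

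Where you go beyond the paper, however, the argument is a plan rather than a proof, and the caveat hidden in ``uniformly along training'' is exactly the obstacle the paper reports it could not overcome: bounding $(EP^\perp) N (EP^\perp)^\trsp$ by $O(\epsilon)$ requires weight bounds whose constants grow with $t$ (the paper's $u(t)$ diverges), while $(EP) N (EP)^\trsp$ admits no time-uniform lower bound since the projected residual may shrink along the flow; the Cauchy--Schwarz comparison you invoke actually requires $(EP) N (EP)^\trsp \geq (EP^\perp) N (EP^\perp)^\trsp$, not merely that the former is $\Omega(1)$ against an $O(\sqrt{\epsilon})$ cross term, so what one obtains this way is an admissible $\epsilon$-threshold that vanishes with $t$ --- precisely the situation the paper describes and the reason it does not state a small-$\epsilon$ result. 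In short, your proposal matches what the paper actually proves (the $\epsilon = 0$ case, by an essentially equivalent route), but it should not be read as establishing \cref{conj:gradient_alignment}, nor even a fixed-$\epsilon$ version of \cref{ass:loss_proj} valid for all $t \geq 0$.
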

Indeed, the above conjecture states that loss gradients wrt weights and "projected" loss gradients wrt weights are positively aligned, so, whenever loss does not increase, neither does projected loss.
Since we use gradient flow, loss is guaranteed to not increase.
Below, we prove the conjecture for linear nets.
    
\begin{proof}[Proof of \cref{conj:gradient_alignment} for $\epsilon = 0$]
    Since $\epsilon$ is zero, we omit the corresponding sup-index:
    \begin{equation}
        \begin{aligned}
            \tr\left[\nabla_1^X \nabla_1^\trsp\right]
            &= \tr\left[\left[W^{\trsp}_2 \ldots W^{\trsp}_L \Xi \tilde X^\trsp \tilde X\right] \tilde X^\trsp \tilde X \left[W^{\trsp}_2 \ldots W^{\trsp}_L \Xi\right]^\trsp\right]
            \\&= \tr\left[\left[W^{\trsp}_2 \ldots W^{\trsp}_L \Xi \tilde X^\trsp\right] \left[W^{\trsp}_2 \ldots W^{\trsp}_L \Xi \tilde X^\trsp\right]^\trsp\right]
            = \tr\left[W_L \ldots W_2 W_2^\trsp \ldots W_L^\trsp\right] \tr\left[\Xi X^\trsp X \Xi^\trsp\right]
        \end{aligned}
    \end{equation}
    by the circular property of trace, and the fact that $\Xi$ is a matrix with a single row.
    Since $d_{out} = 1$, both traces are just squared Euclidean norms of vectors, hence they are non-negative: $\tr\left[\nabla_1^X \nabla_1^\trsp\right] \geq 0$.
        
    Let us do the same for the other layers:
    \begin{equation}
        \begin{aligned}
            \tr\left[\nabla_l^X \nabla_l^\trsp\right]
            &= \tr\left[\left[W^{\trsp}_{l+1} \ldots W^{\trsp}_L \Xi \tilde X^\trsp \tilde X\right] \left[W_{l-1} \ldots W_1 \tilde X\right]^{\trsp} \left[W_{l-1} \ldots W_1 \tilde X\right] \left[W^{\trsp}_{l+1} \ldots W^{\trsp}_L \Xi\right]^\trsp\right]
            \\&= \tr\left[\left[W^{\trsp}_{l+1} \ldots W^{\trsp}_L \Xi \tilde X^\trsp W_1^\trsp \ldots W_{l-1}^\trsp\right] \left[W^{\trsp}_{l+1} \ldots W^{\trsp}_L \Xi \tilde X^\trsp W_1^\trsp \ldots W_{l-1}^\trsp\right]^\trsp\right]
            \\&= \tr\left[W_L \ldots W_{l+1} W_{l+1}^\trsp \ldots W_L^\trsp\right] \tr\left[\Xi X^\trsp W_1^\trsp \ldots W_{l-1}^\trsp W_{l-1} \ldots W_1 X \Xi^\trsp\right]
            \geq 0
        \end{aligned}
    \end{equation}
    for the same reasons as before.
\end{proof}
Clearly, \cref{conj:gradient_alignment} should also hold for small enough $\epsilon$ whenever it holds for $\epsilon = 0$.
However, the bound for the maximal $\epsilon$ for which we were able to guarantee the conjecture statement, vanishes with time $t$, as our weight bounds are too loose.
For this reason, we do not include it here.
See \cref{sec:experiments} for empirical validation of \cref{ass:loss_proj}.

\end{document}